\newcommand{\argmin}{\mbox{argmin}}
\newcommand{\Bf}{\textbf}
\newcommand{\lp}{\left(}
\newcommand{\rp}{\right)}
\newcommand{\R}{\mathbb{R}}
\newcommand{\datayi}{y^{(i)}}
\newcommand{\dataxi}{\bm x^{(i)}}
\newcommand{\datasi}{\bm s^{(i)}}
\newcommand{\datari}{\bm r^{(i)}}
\newcommand{\dataXi}{\Bf X^{(i)}}
\newcommand{\dataMi}{\Bf M^{(i)}}
\newcommand{\dataHi}{\Bf H^{(i)}}
\newcommand{\ve}{\mbox{vec}}
\newcommand{\hv}{\mbox{hvec}}
\newcommand{\nct}{\frac{n(n+1)}{2}}
\newcommand{\RNum}[1]{\uppercase\expandafter{\romannumeral #1\relax}}
\theoremstyle{definition}
\newtheorem{definition}{Definition}[section]
\newtheorem*{remark}{Remark}
\newtheorem{theorem}{Theorem}[section]
\newtheorem{corollary}{Corollary}[theorem]
\newtheorem{lemma}[theorem]{Lemma}
\providecommand{\keywords}[1]{\textbf{\textit{Keywords.}} #1}
\title{\bf{Quadratic Surface Support Vector Machine with L1 Norm Regularization}}
\date{}
\newcommand\blfootnote[1]{%
  \begingroup
  \renewcommand\thefootnote{}\footnote{#1}%
  \addtocounter{footnote}{-1}%
  \endgroup
}
\author[1,*]{Ahmad Mousavi}
\author[2]{Zheming Gao}
\author[3]{Lanshan Han}
\author[3]{Alvin Lim}
\affil[1]{\footnotesize Institute for Mathematics and its Applications, University of Minnesota, Minneapolis, MN 55455, USA}
\affil[2]{\footnotesize College of Information Science and Engineering, Northeastern University, Shenyang, Liaoning 110819, China
}
\affil[3]{\footnotesize Precima, a NielsenIQ Company, Chicago, IL 60606, USA
}
\begin{document}
\maketitle

\abstract{We propose $\ell_1$ norm regularized quadratic surface support vector machine models for binary classification in supervised learning. We establish some desired theoretical properties, including the existence and uniqueness of the optimal solution, reduction to the standard SVMs over (almost) linearly separable data sets, and detection of true sparsity pattern over (almost) quadratically separable data sets if the penalty parameter on the $\ell_1$ norm is large enough. We also demonstrate their promising practical efficiency by conducting various numerical experiments on both synthetic and publicly available benchmark data sets.}

\keywords{binary classification, support vector machines, quadratic support vector machines, L1 norm regularization}

\blfootnote{This research was done in the Research and Development Department of Precima, which fully supported the work. The first two authors contributed equally.
\\
 Emails: amousavi@umn.edu, tonygaobasketball@hotmail.com, lanshan.han@nielseniq.com and alvin.lim@nielseniq.com.}


\section{Introduction}
Machine learning has recently found an extensive range of applications in various fields of contemporary science, including computer science, statistics, engineering, biology, and applied mathematics \cite{di2007survey,langley1995applications}. Several well-received industrial applications in which machine learning has performed well are healthcare, finance, retail, travel and energy \cite{hao2007new, monostori1996machine, luo2018benchmarking}. Nonetheless, compared to this significant applicability demonstrations in machine learning, rigorous theoretical studies in analyzing its models and verifying the correctness of obtained results can be improved.

Supervised learning is a major category of machine learning where labels are also available. Data classification is a vital task in supervised learning that extracts valuable information from available data, and exploits them to assign a new data point to a class \cite{ho2002complexity}. Proposed by Vapnik et al. \cite{cortes1995support} and well developed in the recent twenty years, support vector machine (\ref{formulation: SVM-hard margin}) is an effective and efficient tool for classification.  Given a labeled data set with two classes, the soft margin SVM model \cite{cortes1995support} finds a separating hyperplane that maximizes the minimum margin while minimizing the mis-classification penalty. It separates any (almost) linearly separable data set (almost) perfectly but fails to perform well when the data set is only nonlinearly separable. 

To deal with data sets that are only nonlinearly separable, SVM models with kernel functions were proposed \cite{cortes1995support}. The idea is to use a nonlinear kernel function and map the data to (a feature space embedded into) a higher dimensional space. Then,  the \ref{formulation: SVM-hard margin} model is applied for the classification of the mapped data in this feature space.
However, for a given data set, there is no general principle for the selection of an appropriate kernel function. Besides, the performance of kernel-based SVM models heavily depends on the kernel parameters \cite{cristianini2000introduction,scholkopf2001learning} and tuning those parameters often takes much effort. Therefore, training kernel-based SVM models often requires much extra computational efforts.

To take advantage of the idea of SVM while avoiding the challenges in using the nonlinear kernel trick, a kernel-free quadratic surface SVM (\ref{formulation:QSSVM}) model was proposed by Dagher \cite{dagher2008quadratic}.
Luo et al. recently developed its extension, the so called  soft margin quadratic surface SVM (\ref{formulation:SQSSVM}), that incorporates noise and outliers \cite{luo2016soft}. Both models seek a quadratic separating surface that maximizes an approximation of a relative geometric margin  \cite{dagher2008quadratic, luo2016soft}, while the SQSSVM handles non-separable data sets by penalizing mis-classifications of outliers and noise. These models are more tractable than kernel-based SVM models because the quadratic structure of the separation surface is explicit and clear \cite{luo2016soft}. There are other quadratic separating surface SVM models proposed in literature. For example, Luo et al. \cite{luo2016fuzzy} proposed a fuzzy kernel-free quadratic surface SVM model, and Bai et al. \cite{Bai2015} introduced a quadratic kernel-free least squares SVM and applied it to target diseases classification.

However, both QSSVM and SQSSVM fail to reduce to a hyperplane when a given data set is linearly separable; a natural expectation from an extension of the SVM. In addition, it is favorable to capture the possible sparsity of the hessian matrix of the separating quadratic surface especially when the number of features increases. A potential remedy to resolve these issues is adding an $\ell_p$ norm regularization term to the objective function. There is a broad literature available on the effects of this term on the optimal solution set for different values of $p$ in terms of uniqueness, stability, performance, and sparsity \cite{mousavi2019survey,saab2008stable, shen2018least,zhang2019robust,shen2019exact}. In particular, $\ell_1$ norm regularization  has demonstrated its effectiveness in finance, machine learning and sparse optimization \cite{dai2018generalized, lounici2009taking,mousavi2019solution,qiu2012fast}.

To eliminate the mentioned shortcomings of QSSVM and SQSSVM models and inspired by desirable properties of $\ell_1$ norm regularization, we propose their $\ell_1$ norm regularized extensions, namely,  \ref{formulation:QSSVM-vecl1} and \ref{formulation:SQSSVM-vecL1}; see Section \ref{sec:Quadratic Surface Support Vector Machines}. We rigorously study several interesting theoretical properties of these models, including solution existence and uniqueness, and reduction to the original SVM for (almost) linear separable data sets, and accurate sparsity pattern detection for (almost) quadratically separable data sets if the penalty parameter for the $\ell_1$ norm regularization is large enough.
To further examine our models and evaluate their numerical performance, we conduct computational experiments on both artificially generated and widely used benchmark data sets. The numerical results confirm that the proposed models outperform their parental models like the original \ref{formulation: SVM-hard margin}, \ref{formulation: SVM-soft margin}, \ref{formulation:QSSVM}, \ref{formulation:SQSSVM}, and also quadratic kernel-based model with respect to classification accuracy.

Roughly speaking, the penalty parameter for the $\ell_1$ norm regularization not only provides various interesting properties but also controls the curvature of the separating surface from quadratic to linear a priori; such that the larger this parameter is, the more this surface resembles a line. This key property opens up a range of separating surfaces for training a data set, which is particularly beneficial when a prior knowledge is available. In other words, if we  speculate a data set can be separated by a linear-type surface, we can select this parameter to be relatively large, while smaller values are appropriate choices for more quadratic-type surfaces.  On the contrary, the \ref{formulation: SVM-hard margin}, and \ref{formulation: SVM-soft margin} only capture hyperplanes and  \ref{formulation:QSSVM}, \ref{formulation:SQSSVM} only produce quadratic surfaces even if data set is linearly separable. 

The proposed L1-SQSSVM model is capable of handling data with many features since the $\ell_1$ norm regularization can find the important features, or the interactions between the features to improve the classification accuracy. It has many potential real-world applications in the field of industrial and management optimization, such as sentiment classification of customer reviews \cite{ghaddar2018high}, credit scoring \cite{gao2020kernel}, disease diagnosis \cite{Bai2015}, etc. 

In this paper, we propose two kernel-free SVM models with $\ell_1$ norm regularizations. The proposed models are both theoretically and numerically investigated. The main contributions of this paper include:
    
    \begin{enumerate}
        \item [(1)] To the best of our knowledge, this is the first study of proposing $\ell_1$ norm regularized kernel-free quadratic surface SVM models for binary classification. The proposed models generalize the classification capabilities of the standard linear SVM and the kernel-free quadratic surface SVM models on different types of data sets.
        
        \item [(2)] Some theoretical properties such as the solution existence, uniqueness and vanishing margin property are rigorously studied. Numerical experiments are conducted with some artificial data sets to validate the theoretical properties of the proposed models. Additional numerical experiments conducted with publicly available benchmark data sets verify the effectiveness and the efficiency of the proposed models, especially its potential of classifying data sets with many features.
    \end{enumerate}

The rest of this paper is organized as follows. In Section~\ref{sec:preliminaries}, we bring some preliminaries that facilitate writing and reading the paper. Section~\ref{sec:Quadratic Surface Support Vector Machines} lays down the foundation of different models discussed and proposed in this paper. We investigate different properties of new models \ref{formulation:QSSVM-vecl1} and \ref{formulation:SQSSVM-vecL1} in Section~\ref{sec:Theoretical Properties of L1 Norm Regularized QSSVMs}. We conclude the paper with various numerical experiments that demonstrate the behavior and performance of the proposed models in Section~\ref{sec:Numerical Experiments}.


\section{Notations and Preliminaries}\label{sec:preliminaries}
In this section, we introduce some notations as well as preliminaries to be used in this paper. This section is separated into two subsections. The first subsection mainly focuses on notations and preliminaries regarding matrices, while the second focuses on convex programs.
\subsection{Symmetric Matrices and Vectorization}
Throughout this paper, we use lower case letters to represent scalars, lower case bold letters to represent vectors, and upper case  bold letters to represent matrices. The set of real numbers is denoted by $\R$, the set of $n$-dimensional real vectors is denoted by $\R^n$, and the $n$-dimensional nonnegative orthant is denoted by $\R_+^n$. We use $\mathbf 1_n$ to represent the all one vector of length $n$, $\mathbf 0_{m\times n}$ to represent all 0 matrix of size $m \times n$, and $\mathbf I_n$ to represent the $n\times n$ identity matrix. Let $\mathbb S_n$ be the set of all real symmetric $n\times n$ matrices. For $\Bf A\in \mathbb S_n$, we write $\Bf A \succ 0$ to denote that $\Bf A$ is positive definite and $\Bf A \succeq 0$ to denote that $\Bf A$ is positive semidefinite. For a square matrix $\Bf A = [a_{ij}]_{i=1,\dots,n;j=1,\dots,n}\in \mathbb S_n$, its vectorization is the $n^2$-vector formed by stacking up the columns of $\Bf A$, i.e., the vectorization of $\Bf A$ is given by
$$\ve(\Bf A) = \left[a_{11},\dots, a_{n1},a_{12},\dots,a_{n2},\dots,a_{1n},\dots,a_{nn}\right]^T \in \R^{n^2}.$$
For symmetric matrix $\Bf A$, $\ve(\Bf A)$ contains repeated information, since all the information is contained in the $\frac{1}{2}n(n + 1)$ entries on and below the main diagonal. Therefore, we often consider half-vectorization of a symmetric matrix $\Bf A$, which is given by:
$$\hv(\Bf A) = \left[a_{11},\dots, a_{n1},a_{22},\dots,a_{n2},\dots,a_{nn}\right]^T \in \R^{\nct}.$$
It has been shown in \cite{magnus1980elimination} that, given $n$, there is a unique elimination matrix $\Bf L_n \in \R^{\nct \times n^2}$, such that $$\Bf L_n \ve(\Bf A) = \hv(\Bf A),\quad \forall \, \Bf A \in \mathbb S_n.$$ For example, when $n=3$, the elimination matrix
$$\Bf L_3 \, = \, \left[
\begin{array}{c;{2pt/2pt}c;{2pt/2pt}c}
  \begin{array}{ccc}1&0&0\\0&1&0 \\ 0&0&1\end{array} & \mathbf{0}_{3\times 3} & \mathbf{0}_{3 \times 3} \\\hdashline[2pt/2pt]
  \mathbf{0}_{2 \times 3} & \begin{array}{ccc} 0 & 1& 0\\ 0 & 0 & 1\end{array} & \mathbf{0}_{2\times 3} \\\hdashline[2pt/2pt]
  \mathbf{0}_{1 \times 3} & \mathbf{0}_{1\times 3} & \begin{array}{ccc} 0 & 0 & 1\end{array}
\end{array}
\right].$$
It is known that the elimination matrix $\Bf L_n$ has full row rank \cite{magnus1980elimination}, which is $\frac{1}{2}n(n+1)$. Conversely, for any $n$, there also exists a unique duplication matrix $\Bf D_n \in \R^{n^2 \times \nct}$ such that $$\Bf D_n \hv(\Bf A) = \ve(\Bf A), \quad \forall\, \Bf A \in \mathbb S_n.$$ When $n=3$, the duplication matrix
$$\Bf D_3 \, = \, \left[
\begin{array}{c;{2pt/2pt}c;{2pt/2pt}c}
  \begin{array}{ccc}1&0&0\\0&1&0 \\ 0&0&1\end{array} & \mathbf{0}_{3\times 2} & \mathbf{0}_{3 \times 1} \\\hdashline[2pt/2pt]
  \begin{array}{ccc} 0 & 1& 0\\ 0 & 0 & 0 \\0 &0 &0\end{array} & \begin{array}{cc} 0 & 0 \\1 & 0\\ 0 & 1\end{array}  & \mathbf{0}_{3\times 1} \\\hdashline[2pt/2pt]
  \begin{array}{ccc} 0 & 0& 1\\ 0 & 0 & 0 \\0 &0 &0\end{array} & \begin{array}{cc} 0 & 0 \\0 & 1\\ 0 & 0\end{array} & \begin{array}{c} 0 \\ 0 \\ 1\end{array}
\end{array}
\right].$$
It is also known that $\Bf D_n$ has full column rank, which is $\frac{1}{2}n(n+1)$. Moreover, $$\Bf L_n \Bf  D_n \, = \, \mathbf{I}_{\nct},$$
the $\nct \times \nct$ identity matrix. Given two matrices $\Bf A \in \R^{m\times n}$ and $\Bf B \in \R^{p \times q}$, the Kronecker product of them is written as
$$\Bf A \otimes \Bf B \, = \, \left[
\begin{array}{ccc}
a_{11} \Bf B & \cdots & a_{1n} \Bf B \\
\vdots & \vdots & \vdots \\
a_{m1} \Bf B & \cdots & a_{mn}\Bf B
\end{array}\right] \, \in \, \R^{mp \times nq}.$$
Let a symmetric matrix $\Bf M$ be partitioned as follows
\begin{equation}\label{eq:partitioned_matrix}
\Bf M \, = \, \left[
\begin{array}{cc}
\Bf A & \Bf B \\
\Bf B^T & \Bf C
\end{array}
\right].
\end{equation}
We state the following standard lemma regarding the positive definiteness of partitioned matrix $\Bf M$.

\begin{lemma}\label{lm:pd_partitioned}\cite[Proposition 16.1]{gallier2011schur}
Let matrix $\Bf M$ be partitioned as (\ref{eq:partitioned_matrix}). Then, $\Bf M$ is positive definite if and only if $\Bf C$ is positive definite and $\Bf A-\Bf B\Bf C^{-1}\Bf B^T$ is positive definite.
\end{lemma}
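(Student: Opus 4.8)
The plan is to establish both directions at once via the block $LDL^{T}$ congruence factorization of $\Bf M$ obtained by eliminating the off-diagonal block, and then invoke the fact that congruence by an invertible matrix preserves positive definiteness (Sylvester's law of inertia). First I would record the algebraic identity, valid whenever $\Bf C$ is invertible,
$$\Bf M \, = \, \Bf P^{T} \Bf D \, \Bf P, \qquad \Bf P \, = \, \left[\begin{array}{cc} \mathbf I & \mathbf 0 \\ \Bf C^{-1}\Bf B^{T} & \mathbf I \end{array}\right], \qquad \Bf D \, = \, \left[\begin{array}{cc} \Bf A - \Bf B \Bf C^{-1}\Bf B^{T} & \mathbf 0 \\ \mathbf 0 & \Bf C \end{array}\right],$$
which is checked by direct block multiplication, using the symmetry $\left(\Bf C^{-1}\right)^{T} = \Bf C^{-1}$ that follows from $\Bf C \in \mathbb S_n$. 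Since $\Bf P$ is block lower-triangular with identity diagonal blocks it is invertible, so the substitution $\bm z = \Bf P \bm x$ is a bijection and $\bm x^{T}\Bf M \bm x = \bm z^{T}\Bf D \bm z$ for all $\bm x$. Consequently $\Bf M \succ 0$ if and only if $\Bf D \succ 0$; and because $\Bf D$ is block diagonal, writing $\bm z = (\bm z_1,\bm z_2)$ gives $\bm z^{T}\Bf D\bm z = \bm z_1^{T}(\Bf A - \Bf B\Bf C^{-1}\Bf B^{T})\bm z_1 + \bm z_2^{T}\Bf C\bm z_2$, so $\Bf D \succ 0$ holds exactly when both diagonal blocks $\Bf A - \Bf B\Bf C^{-1}\Bf B^{T}$ and $\Bf C$ are positive definite.

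Because the factorization presupposes that $\Bf C$ is invertible, I would organize the equivalence as follows. For the ``if'' direction, assuming $\Bf C \succ 0$ and $\Bf A - \Bf B\Bf C^{-1}\Bf B^{T} \succ 0$ immediately makes $\Bf C$ invertible, so the identity above applies and the block-diagonal computation gives $\Bf M \succ 0$. For the ``only if'' direction, assuming $\Bf M \succ 0$, I would first establish $\Bf C \succ 0$ separately by restricting the quadratic form to the subspace of vectors of the form $(\mathbf 0,\bm y)$: for any nonzero $\bm y$ we have $(\mathbf 0,\bm y)^{T}\Bf M (\mathbf 0,\bm y) = \bm y^{T}\Bf C\bm y > 0$, so $\Bf C$ is positive definite and hence invertible. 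With $\Bf C$ invertible the factorization is again available, and $\Bf M \succ 0$ forces $\Bf D \succ 0$, whence the Schur complement $\Bf A - \Bf B\Bf C^{-1}\Bf B^{T} \succ 0$.

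The only delicate point, and the step I would treat most carefully, is this ordering issue: the clean congruence identity is meaningful only once $\Bf C^{-1}$ exists, so the invertibility of $\Bf C$ must be secured before the factorization is invoked. In the forward direction this is handed to us by hypothesis, whereas in the reverse direction it must be extracted from $\Bf M \succ 0$ via the principal-submatrix argument above. Everything else reduces to the routine verification of the block product and the elementary observation that a block-diagonal symmetric matrix is positive definite precisely when each of its diagonal blocks is.
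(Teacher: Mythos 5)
Your proof is correct. Note that the paper itself gives no proof of this lemma at all---it is imported by citation from \cite[Proposition 16.1]{gallier2011schur}---and your block $LDL^{T}$ congruence factorization, with the invertibility of $\Bf C$ secured first in the ``only if'' direction by testing $\Bf M$ on vectors of the form $(\mathbf 0,\bm y)$, is precisely the standard argument given in that cited source, so your attempt matches the intended proof and needs no correction.
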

\subsection{Convex Optimization and Some Standard Results}
The SVM-type problems are typically modeled as convex optimization problems. We review a few related results from optimization theory in this subsection. We first consider the following quadratic program (QP):
\begin{equation}\label{eq:qp_func}
\begin{array}{ll}
  \min & \frac{1}{2} \bm x^T \Bf Q \bm x + \bm b^T \bm x\\[5pt]
  \mbox{s.t.} & \Bf  A\bm x \, \geqslant \,\bm c
\end{array}
\end{equation}
where $\Bf Q\in \mathbb S_p$ is a given symmetric matrix, $\bm b\in \R^p$ and $\bm c\in \R^q$ are given vectors, and $\Bf A\in \R^{q\times p}$ is a given matrix. We first provide a result regarding solution existence of the QP (\ref{eq:qp_func}).

\begin{lemma}\label{lemma:thm 1.1 Kim2012}
Consider QP (\ref{eq:qp_func}). If the objective function is bounded from below over a nonempty feasible domain, then it has a solution.
\end{lemma}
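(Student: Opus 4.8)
The plan is to recognize this as an instance of the Frank--Wolfe theorem: a (possibly nonconvex) quadratic function that is bounded below on a nonempty polyhedron attains its infimum. Write $f(\bm x) = \frac{1}{2}\bm x^T \Bf Q \bm x + \bm b^T \bm x$ and let $P = \{\bm x : \Bf A \bm x \geqslant \bm c\}$ be the nonempty feasible polyhedron, with $f^* := \inf_{\bm x \in P} f(\bm x) > -\infty$ by hypothesis. First I would take a minimizing sequence $\{\bm x^k\} \subset P$ with $f(\bm x^k) \downarrow f^*$. If this sequence is bounded, the conclusion is immediate: $P$ is closed, so a convergent subsequence has its limit $\bm x^*$ in $P$, and continuity of $f$ gives $f(\bm x^*) = f^*$, i.e. the infimum is attained.

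The substance lies in the unbounded case. Here I would normalize, passing to a subsequence with $\|\bm x^k\| \to \infty$ and $\bm x^k/\|\bm x^k\| \to \bm d$ for some unit vector $\bm d$. Dividing $\Bf A \bm x^k \geqslant \bm c$ by $\|\bm x^k\|$ and letting $k \to \infty$ yields $\Bf A \bm d \geqslant \bm 0$, so $\bm d$ is a recession direction of $P$; in particular $\bm x + t\bm d \in P$ for every $\bm x \in P$ and $t \geqslant 0$. Expanding
$$f(\bm x + t \bm d) = f(\bm x) + t\,(\Bf Q \bm x + \bm b)^T \bm d + \tfrac{1}{2} t^2\, \bm d^T \Bf Q \bm d,$$
boundedness of $f$ from below along this ray forces $\bm d^T \Bf Q \bm d \geqslant 0$, and whenever $\bm d^T \Bf Q \bm d = 0$ it forces $(\Bf Q \bm x + \bm b)^T \bm d \geqslant 0$ for every $\bm x \in P$.

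With these recession inequalities in hand, the idea is to push the minimizing sequence onto a lower-dimensional face of $P$ without increasing $f$, and then induct on the dimension of $P$. Concretely, from a point $\bm x^k$ I would move along $-\bm d$ (or $+\bm d$) as far as feasibility permits: if the ray stays in $P$ indefinitely and $f$ is nonincreasing along it, then $f$ is in fact constant along $\bm d$ (combining $\bm d^T\Bf Q\bm d \geqslant 0$ with the sign of the linear term), so $\bm d$ lies in the lineality space and can be quotiented out, lowering the dimension; otherwise the motion halts when a new constraint $\Bf A_{i\cdot}\bm x = c_i$ becomes active, depositing a point of no-larger objective value on a proper face. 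Either way the problem reduces to minimizing the same quadratic over a polyhedron of strictly smaller dimension, on which the inductive hypothesis supplies a minimizer, and one then verifies this minimizer is optimal for the original problem. The base case, where $P$ is a single point (or otherwise forces bounded minimizing sequences), is covered by the bounded case above.

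The main obstacle I anticipate is precisely the flat recession directions, those with $\bm d^T \Bf Q \bm d = 0$, along which $f$ is affine rather than coercive. For these the quadratic term offers no help in confining the sequence, and one must argue carefully from the sign condition $(\Bf Q \bm x + \bm b)^T \bm d \geqslant 0$ that moving toward the boundary never increases $f$, so that the dimension-reduction step is legitimate and the induction terminates at a bounded, vertex-supported subproblem handled by the bounded case.
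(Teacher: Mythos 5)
Your proposal takes a genuinely different route from the paper, for the simple reason that the paper does not prove this lemma at all: the statement is the classical Frank--Wolfe theorem (a quadratic function bounded below over a nonempty polyhedron attains its infimum), and the authors explicitly defer to \cite{kim2012solution} for a detailed proof. What you have written is, in outline, the standard proof of that cited result: a minimizing sequence, compactness in the bounded case, and in the unbounded case a normalized limit direction $\bm d$ with $\Bf A \bm d \geqslant \bm 0$, the ray expansion giving $\bm d^T \Bf Q \bm d \geqslant 0$ together with the sign condition $(\Bf Q \bm x + \bm b)^T \bm d \geqslant 0$ in the flat case, then a push onto a proper face and induction on the dimension of the polyhedron. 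The trade-off is the usual one: the paper's citation keeps the exposition short and leans on a known result, while your argument makes the lemma self-contained and exposes the mechanism (recession directions, faces, finite induction) that makes existence work even without convexity of the quadratic.

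One concrete point needs to be patched before your sketch closes into a proof. The dimension-reduction step (move from $\bm x^k$ along $-\bm d$ without increasing $f$, stopping at a newly active constraint) is justified only when $\bm d^T \Bf Q \bm d = 0$, since it relies on $f(\bm x^k - t\bm d) = f(\bm x^k) - t(\Bf Q \bm x^k + \bm b)^T \bm d \leqslant f(\bm x^k)$; if instead $\bm d^T \Bf Q \bm d > 0$, the sign condition on the linear term is unavailable and $f$ can increase along $-\bm d$, so the face-pushing argument as stated does not apply, and your remark that coercivity ``confines the sequence'' in that case is intuition rather than proof. The fix is standard and cheap: the direction $\bm d$ obtained from an unbounded minimizing sequence automatically satisfies $\bm d^T \Bf Q \bm d = 0$, because $f(\bm x^k)/\|\bm x^k\|_2^2 \to \tfrac{1}{2}\bm d^T \Bf Q \bm d$ (the linear term is killed by the normalization) while $f(\bm x^k) \to f^*$ stays finite, forcing the limit to be zero. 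In other words, a minimizing sequence can only escape to infinity along flat recession directions, which is exactly the case your sign-condition argument and induction handle; with that observation inserted, your outline becomes a correct proof.
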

Detailed proofs of Lemma \ref{lemma:thm 1.1 Kim2012} can be found in  \cite{kim2012solution} and hence is omitted here.

Our proposed optimization models in this paper are convex and comprise $\ell_1$ norms in their objective functions. Since the $\ell_1$ norm is not differentiable everywhere, we  present some concepts and results regarding non-smooth convex optimization problems. For a function $f(\bm x): \R^n \mapsto \R$, its subdifferential at $\bm x$, denoted by $\partial (f)|_{\bm x}$ is defined as
$$\partial (f)|_{\bm x} \, \triangleq \, \{\bm g \in \R^n \ |\ \bm g^{T}(\bm y-\bm x) \leqslant f(\bm y) - f(\bm x), \, \forall \,\bm y\}. $$ It is known that $\partial f(\bm x)$ is a closed convex set (possibly empty) in general. But this set is nonempty and bounded for an interior point $\bm x $ in the domain of a convex function.
We are particularly interested in the $\ell_1$ norm as a function for which we have
$$\partial (\|\cdot\|_1)|_{\bm x} \, = \, J_1\times J_2 \times \cdots \times J_n,$$
with
$$J_k = \left\{
\begin{array}{ll}
[-1,1] & \mbox{if } \bm x_k = 0,\\
\{1\} & \mbox{if } \bm x_k > 0,\\
\{-1\} & \mbox{if } \bm x_k < 0,\\
\end{array}.\right.$$
Consider a convex optimization problem with possibly a non-smooth objective function as follows:
\begin{equation*}
\begin{array}{rl}
\min & f(\bm x) \\
\mbox{s.t.} & g_i(\bm x) \, \geqslant \, 0, \quad i=1, \dots, m.
\end{array}
\end{equation*}
Assuming that some certain constraint qualification holds at a feasible vector $\bm x^*$, this vector is a local optimal solution of this problem if and only if we have
\begin{equation}\label{eq:nonsmooth_opt_KKT}
\bm 0 \, \in \, \partial f(\bm x^*)-\sum_{i=1}^m \bm \alpha_i\nabla g_i(\bm x^*),
\end{equation}
for  $\bm \alpha \in \mathbb R^m_+$ such that $\bm \alpha_i g_i(\bm x^*)=0, \ i=1,\dots,m$; see Proposition 3.2.3, Theorem 6.1.8 and Exercise 5 of Section 7.2 in \cite{borwein2010convex}. The constraint qualification of interest in our discussion is the existence of an interior feasible point for the constraints. This is guaranteed based on linear or quadratic separability assumptions depending on the situation.


\section{Quadratic Surface Support Vector Machines} \label{sec:Quadratic Surface Support Vector Machines}
In this section, we introduce quadratic surface support vector machines (QSSVMs). We first describe the training data set and then discuss existing optimization models for QSSVMs. In the last subsection, we propose an $\ell_1$ norm regularized version of QSSVMs.
\subsection{Training Data Set}\label{subsec:notations}
For any classification problems, the training set is typically composed of $m$ samples each represented by a vector in $\R^n$ and a label. Mathematically, a training data set with two classes can be denoted by
\begin{equation}\label{def:dataset D}
 \mathcal D = \left\{\left(\dataxi, \datayi\right)_{i=1,\cdots,m} \ \left| \ \dataxi \, \in \, \R^n, \ \datayi \in \{-1, 1\}\right.
\right\},
\end{equation}
where $m$ is the sample size, $n$ is the number of features, $\dataxi = [\bm x^{(i)}_1, \dots, \bm x^{(i)}_n]^T \in \R^n$ is the vector of feature values for sample $i$, and $\datayi$ is the label for sample $i$. We let $\mathcal M^+ \triangleq \{i \,|\,\datayi=1\}$ and $\mathcal M^- \triangleq\{i\,|\,\datayi = -1\}$. We assume that $\mathcal M^+ \neq \emptyset$ and $\mathcal M^- \neq \emptyset$.
With the given training data, a recent approach proposed by Dagher \cite{dagher2008quadratic} and developed by Luo et al. \cite{luo2016soft}  aims to separate the data using a quadratic function: $$f(\bm x) \, = \, \frac{1}{2} \bm x^T \Bf W \bm x +\bm b^T \bm x + c,$$ with $\Bf W \in \mathbb S_n, \bm b\in \R^n,$ and $c \in \R$. To facilitate the presentation, we define a few matrices and vectors. Define the sample matrix as:
$$\Bf  X \, = \, \left[
\begin{array}{c}
\left(\bm x^{(1)}\right)^T\\
\vdots\\
\left(\bm x^{(m)}\right)^T\\
\end{array}
\right] \in \R^{m\times n}.$$
%
For the sake of analysis, we assume that:

\vspace{3mm}

\begin{itemize}
\item[(A1)] \centering \quad $\Bf X$ has full column rank.
\end{itemize}

\vspace{3mm}

Note that assumption (A1) is a very mild assumption. First, we typically have $m \gg n$. Secondly, if (A1) does not hold, then there is redundancy in the set of features. We typically remove the redundancy by conducting principal component analysis.

We say a data set $\mathcal D$ is \emph{quadratically separable} \cite{luo2016soft} if there exist $\Bf W\in \R^{n\times n},    \bm b \in \R^n,\text{ and } c \in \R$ such that
\begin{equation}\label{eq:quad_separable}
\begin{array}{c}
\displaystyle{ \frac{1}{2} {\dataxi}^T \Bf W \dataxi + \bm b^T \dataxi + c \,  > \, 0 }, \quad \forall \, i \in \mathcal M^+, \\[5pt]
\displaystyle{ \frac{1}{2} {\dataxi}^T \Bf  W \dataxi + \bm b^T \dataxi + c \, < \, 0}, \quad \forall \, i \in \mathcal M^-.
\end{array}
\end{equation}
We say a data set $\mathcal D$ is \emph{linearly separable} \cite{deng2012support} if there exist $\bm 0_n \neq \bm b \in \R^n,\text{ and }c \in \R$ such that
\begin{equation}\label{eq:linear_separable}
\begin{array}{c}
\bm b^T \dataxi + c > 0, \quad \forall \, i \in \mathcal M^+, \\[5pt]
\bm b^T \dataxi + c < 0, \quad \forall \, i \in \mathcal M^-.
\end{array}
\end{equation}
Note that a linearly separable data set is simply quadratically separable with  $\Bf W=\Bf 0$. To develop optimization models, we introduce the following definitions.
\begin{definition}[Vectors $\bm s$ and $\bm r$]\label{def: datasi and datari}
  For all $i = 1,\cdots,m $, define $\datasi \in \R^{\nct}$ as
  \newline
 \resizebox{0.89\textwidth}{!}{
\begin{minipage}{\linewidth}
\begin{equation*}
 \datasi = \left[\frac{1}{2} \dataxi_1 \dataxi_1, \dataxi_1 \dataxi_2, \dots, \dataxi_1 \dataxi_n, \frac{1}{2} \dataxi_2 \dataxi_2, \dataxi_2 \dataxi_3, \dots, \dataxi_2 \dataxi_n, \dots, \frac{1}{2} \dataxi_n \dataxi_n  \right]^T,
\end{equation*}
\end{minipage}
}
\newline
and
  $$
  \datari = \begin{bmatrix}
              \datasi \\
              \dataxi
            \end{bmatrix}.
  $$
\end{definition}
For convenience, denote $\bm w = \hv(\Bf W)$ for future use.
\begin{definition}[Vectorized parameters $\bm z$]\label{def: vectorized z}
  Define $\bm z \in \R^{\nct + n}$ as
  $
\bm z \, = \, \left[
\begin{array}{c}
  \bm w \\
  \bm b
\end{array}
\right].
  $
\end{definition}
\noindent Hence, by definitions \ref{def: vectorized z} and \ref{def: datasi and datari} we have
\begin{equation*}\label{constraint-linear-z}
  \frac{1}{2} {\dataxi}^T  \Bf W \dataxi + {\dataxi}^T \bm b + c = \bm z^T \datari + c.
\end{equation*}
Let
 $\Bf V:= \begin{bmatrix}
        \Bf I_{\frac{n(n+1)}{2}} & \Bf 0_{\frac{n(n+1)}{2}\times n}
      \end{bmatrix}.$
This implies that $\hv(\Bf W)=\bm w  = \Bf V\bm z \label{Vz}$.
\begin{definition}\label{def:X^i}
  For all $ i =1,\cdots,m$, define $\dataXi \in \R^{n \times n^2}$ as
  $$\dataXi = \mathbf I_n \otimes \left(\dataxi\right)^T = \left[
  \begin{array}{ccc}
    \left(\dataxi\right)^T &  &  \\
     & \ddots &  \\
     &  & \left(\dataxi\right)^T
  \end{array} \right].$$
\end{definition}
\begin{definition}[Matrix $\dataMi$]\label{def:M^i}
  For all $i =1,\cdots,m$, define $\dataMi \in \R^{n \times \nct }$ as
  $
  \dataMi \, = \,  \dataXi \Bf D_n.
  $
\end{definition}
\begin{definition}[Matrix $\dataHi$]\label{def: H^i}
  For all $i =1,\cdots,m$, define $\dataHi \in \R^{n \times \frac{n(n+3)}{2} }$ as
  $
\dataHi \, = \, \left[
\begin{array}{c;{2pt/2pt}c}
  \dataMi & \mathbf I_n
\end{array}
\right].
  $
\end{definition}
According to the above definitions, we have
$$\Bf W \dataxi \, = \, \dataXi \ve(\Bf W) \, = \, \dataXi \Bf  D_n \hv(\Bf W) \, = \, \dataMi \hv(\Bf W) \, = \, \dataMi \bm w,$$
and thus:
$$\Bf W \dataxi + \bm b \, = \, \dataMi \bm w + \mathbf I_n \bm b \, = \, \dataHi \bm z. $$
It follows that:
\begin{equation*}
\sum_{i=1}^m \left\|\Bf W \dataxi + \bm  b\right\|_2^2 \, = \, \sum_{i=1}^m \left(\dataHi \bm z\right)^T\left(\dataHi \bm z\right) \,= \, \bm z^T \left[\sum_{i=1}^{m} \left(\dataHi\right)^T \dataHi\right] \bm z.
\end{equation*}
Define
\begin{equation}\label{matrix G}
 \Bf  G := 2 \sum_{i=1}^{m} \left(\dataHi\right)^T \dataHi,
\end{equation}
which implies that $$ \sum_{i=1}^m \left\|\Bf W \dataxi + \bm b\right\|_2^2 = \frac{1}{2} \bm z^T \Bf G \bm z.$$
Clearly, we know that $\Bf  G \succeq 0$. We will provide conditions under which $\Bf  G \succ 0$ in a later section.

\subsection{Quadratic Surface SVM  Models}
Mathematical computations for the margin of quadratic surface  support vector machine  suggest the following formulation:
\begin{equation}\tag{QSSVM}\label{formulation:QSSVM}
\begin{aligned}
\min_{\Bf W, \bm b, c} \quad & \sum_{i=1}^m \|\Bf W \dataxi + \bm b\|_2^2 \\
s.t. \quad & \datayi\lp \frac{1}{2} {\dataxi}^T \Bf W \dataxi + {\dataxi}^T \bm b + c \rp \geqslant 1, \quad  i=1,\cdots,m, \\
           & \Bf W\in \mathbb S_n, \bm b \in \R^n, c \in \R.
\end{aligned}
\end{equation}
To account for possible noise and outliers in the data,  the following soft margin version of QSSVM penalizes mis-classifications:
\begin{equation}\tag{SQSSVM}\label{formulation:SQSSVM}
\begin{aligned}
\min_{\Bf W, \bm b, c, \bm \xi} \quad & \sum_{i=1}^m \|\Bf W \dataxi + \bm b\|_2^2 + \mu \sum_{i=1}^m  \bm \xi_i \\
s.t. \quad & \datayi\lp \frac{1}{2} {\dataxi}^T \Bf W \dataxi + {\dataxi}^T \bm b + c \rp \geqslant 1-  \bm \xi_i, \quad i=1,\cdots,m, \\
           & \Bf  W\in \mathbb S_n, \, \bm b \in \R^n, \, c \in \R, \, \bm \xi \in \R^m_+.
\end{aligned}
\end{equation}
With the notations defined in Subsection~\ref{subsec:notations}, we have the following equivalent formulation for \ref{formulation:QSSVM}:
\begin{equation} \tag{QSSVM\textprime}\label{formulation:QSSVM'}
       \begin{aligned}
          \min_{\bm z, c} \quad &  \frac{1}{2} {\bm z}^T \Bf G {\bm z}  \\
          s.t. \quad & \datayi\lp \bm z^T \datari + c \rp \geqslant 1, \quad i = 1, \dots, m, \\
           & \bm z \in \R^{\frac{n(n+1)}{2}+n}, \, c \in \R,
       \end{aligned}
\end{equation}
and similarly for \ref{formulation:SQSSVM}:
      \begin{equation} \tag{SQSSVM\textprime}\label{formulation:SQSSVM'}
       \begin{aligned}
          \min_{\bm z, c, \bm \xi} \quad &  \frac{1}{2} {\bm z}^T \Bf G {\bm z}  +  \mu \sum_{i=1}^m  \bm \xi_i \\
          s.t. \quad & \datayi\lp \bm z^T \datari + c \rp \geqslant 1-  \bm \xi_i, \quad i = 1, \dots, m \\
           & \bm z \in \R^{\frac{n(n+1)}{2}+n}, \, c \in \R, \, \bm \xi \in \R^m_+.
       \end{aligned}
\end{equation}
\subsection{$\ell_1$ Norm Regularized Quadratic Surface SVM  Models}  \label{L1 norm regularized models}
As we can see, by using quadratic surface to separate the data, the classification model complexity is increased significantly. While the complexity provides extra flexibility to separate data in the training set, it could also lead to over-fitting issues and hence inferior classification performance on testing data sets. One particular situation is when the training data set is linearly separable, it is desirable that QSSVMs can find a hyperplane to separate the data. However, there is no guarantee if we use models (\ref{formulation:QSSVM}) or (\ref{formulation:SQSSVM}). On the other hand, $\ell_1$ norm regularization technique has been shown to reduce model complexity in many applications. Therefore, we propose to introduce L1-regularization into QSSVMs:
\begin{equation} \tag{L1-QSSVM}\label{formulation:QSSVM-vecl1}
\begin{aligned}
\min_{\Bf W, \bm b, c} \quad & \sum_{i=1}^m \|\Bf W \dataxi + \bm b\|_2^2 + \lambda \sum_{1\leqslant i\leqslant j\leqslant n} | \Bf{W}_{ij}| \\
s.t. \quad & \datayi\lp \frac{1}{2} {\dataxi}^T \Bf W \dataxi + {\dataxi}^T \bm b + c \rp \geqslant 1, \quad i = 1, \dots, m, \\
           & \Bf W\in \mathbb S_n, \, \bm b \in \R^n, \, c \in \R,
\end{aligned}
\end{equation}
where $\lambda$ is a positive constant that penalizes nonzero elements of the model matrix $\Bf W$. This is equivalent to:
\begin{equation} \tag{L1-QSSVM\textprime}\label{formulation:QSSVM-vecl1-vec}
       \begin{aligned}
          \min_{\bm z,c} \quad &  \frac{1}{2} {\bm z}^T \Bf  G {\bm z}  + \lambda \|\Bf V\bm z\|_1\\
          s.t. \quad & \datayi\lp \bm z^T \datari + c \rp \geqslant 1, \quad i = 1, \dots, m, \\
           & \bm z \in \R^{\frac{n(n+1)}{2}+n}, \, c \in \R.
       \end{aligned}
\end{equation}

To account for outliers, we consider the following soft version of \ref{formulation:QSSVM-vecl1}:
\begin{equation} \tag{L1-SQSSVM}\label{formulation:SQSSVM-vecL1}
\begin{aligned}
\min_{\Bf W, \bm b, c, \bm \xi} \quad & \sum_{i=1}^m \|\Bf W \dataxi + \bm b\|_2^2 + \lambda \sum_{1\leqslant i\leqslant j\leqslant n} | \Bf W_{ij}| + \mu \sum_{i=1}^m  \bm \xi_i \\
s.t. \quad & \datayi\lp \frac{1}{2} {\dataxi}^T \Bf W \dataxi + {\dataxi}^T \bm b + c \rp \geqslant 1-  \bm \xi_i, \quad i = 1, \dots, m, \\
           & \Bf W\in \mathbb S_n, \, \bm b \in \R^n, \,  c \in \R, \, \bm \xi \in \R^m_+,
\end{aligned}
\end{equation}
where $\mu>0$ is a positive penalty for incorporating noise and outliers.
This problem can be equivalently rewritten as the following convex program:
      \begin{equation} \tag{L1-SQSSVM\textprime}\label{formulation:SQSSVM-vectorL1-vec}
       \begin{aligned}
          \min_{\bm z, c, \bm \xi} \quad &  \frac{1}{2} {\bm z}^T \Bf G {\bm z} \, + \, \lambda \|\Bf V \bm z\|_1 \, +  \, \mu \sum_{i=1}^m  \bm \xi_i \\
          s.t. \quad & \datayi\lp \bm z^T \datari + c \rp \geqslant 1-  \bm \xi_i, \quad i = 1, \dots, m, \\
           & \bm z \in \R^{\frac{n(n+1)}{2}+n}, \, c \in \R, \, \bm \xi \in \R^m_+.
       \end{aligned}
      \end{equation}
In the next section, we study theoretical properties of the proposed models.
\section{Theoretical Properties of $\ell_1$ Norm Regularized QSSVMs} \label{sec:Theoretical Properties of L1 Norm Regularized QSSVMs}
In this section, we explore theoretical properties of the proposed $\ell_1$ norm regularized QSSVM models. We first discuss the solution existence and uniqueness of our proposed models. We then study the soft margin models and show that the margin vanishes when $\mu$ is large enough and the data is separable. Lastly, we examine the effects of the $\ell_1$ norm regularization.
\subsection{Solution Existence and Uniqueness}
\begin{theorem}[Solution Existence] \label{theorem:existence}
   Given any data set $\mathcal D$ defined  in (\ref{def:dataset D}), the model \ref{formulation:SQSSVM-vecL1} has an optimal solution with a finite objective value. A similar result holds for (\ref{formulation:QSSVM-vecl1}) on any linearly or qaudratically separable data set.
\end{theorem}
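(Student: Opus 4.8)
The plan is to reduce both models to the quadratic program \eqref{eq:qp_func} so that Lemma \ref{lemma:thm 1.1 Kim2012} (the Frank--Wolfe type existence theorem) applies, after checking its two hypotheses: a nonempty feasible domain and an objective bounded from below. The only obstruction to a direct application is the nondifferentiable term $\lambda\|\Bf V\bm z\|_1$, which is not quadratic, so first I would remove it by the standard epigraph linearization. Introduce an auxiliary vector $\bm t\in\R^{\nct}$ together with the linear constraints $-\bm t\leqslant \Bf V\bm z\leqslant \bm t$ and replace $\lambda\|\Bf V\bm z\|_1$ by $\lambda\,\bm 1_{\nct}^T\bm t$. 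Since $\bm t_k\geqslant|(\Bf V\bm z)_k|$ forces $\bm t_k\geqslant 0$, minimizing $\lambda\,\bm 1^T\bm t$ drives each $\bm t_k$ down to $|(\Bf V\bm z)_k|$ at optimality; hence the augmented problem in the variables $(\bm z,c,\bm\xi,\bm t)$ is an exact reformulation of \eqref{formulation:SQSSVM-vectorL1-vec}, with the same optimal value. Its quadratic form is $\tfrac12\bm z^T\Bf G\bm z$ with $\Bf G\succeq 0$ (established in Subsection \ref{subsec:notations}), and all its constraints are linear, so it has precisely the shape of \eqref{eq:qp_func}.

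Next I would verify the two hypotheses of Lemma \ref{lemma:thm 1.1 Kim2012} for the soft-margin case. Boundedness from below is immediate on the feasible set: $\tfrac12\bm z^T\Bf G\bm z\geqslant 0$ since $\Bf G\succeq 0$, $\lambda\,\bm 1^T\bm t\geqslant 0$ because $\bm t\geqslant\bm 0$ there, and $\mu\sum_i\bm\xi_i\geqslant 0$ because $\bm\xi\in\R_+^m$, so the objective is at least $0$. For nonemptiness of the feasible domain, the slack variables of \eqref{formulation:SQSSVM-vecL1} make the point $\bm z=\bm 0$, $c=0$, $\bm\xi=\bm 1_m$, $\bm t=\bm 0$ feasible for \emph{every} data set; this is what lets the first claim hold with no separability assumption. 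Lemma \ref{lemma:thm 1.1 Kim2012} then yields an optimal solution of the QP, and discarding $\bm t$ gives an optimal solution of \eqref{formulation:SQSSVM-vecL1} with finite objective value.

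For the hard-margin model \eqref{formulation:QSSVM-vecl1-vec} there are no slack variables, so feasibility is exactly where the separability hypothesis must enter, and this is the step I expect to require the most care. Under quadratic separability there exist $\Bf W,\bm b,c$ with $\datayi(\bm z^T\datari+c)>0$ for all $i$, where $\bm z=[\hv(\Bf W)^T,\bm b^T]^T$: for $i\in\mathcal M^+$ this is the first inequality of \eqref{eq:quad_separable}, and for $i\in\mathcal M^-$ multiplying the second inequality by $\datayi=-1$ flips the sign. Because there are only finitely many samples, scaling $(\bm z,c)$ by a sufficiently large $\kappa>0$ (for instance any $\kappa\geqslant 1/\min_i \datayi(\bm z^T\datari+c)$) gives $\datayi\bigl(\kappa\bm z^T\datari+\kappa c\bigr)\geqslant 1$ for every $i$, producing a feasible point; linear separability is just the special case $\Bf W=\bm 0$.

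Finally, applying the same epigraph linearization and the same nonnegativity argument for boundedness from below, the hard-margin problem is again a QP of the form \eqref{eq:qp_func} with a nonempty feasible domain, so Lemma \ref{lemma:thm 1.1 Kim2012} delivers an optimal solution with finite objective value and completes the proof. The two conceptual obstacles, neither of them computationally heavy, are (i) casting the nonsmooth $\ell_1$ penalty as linear constraints so that the QP existence result is usable, and (ii) converting the strict separating inequalities of \eqref{eq:quad_separable} into a genuinely feasible point for the margin-normalized constraints via the scaling factor $\kappa$.
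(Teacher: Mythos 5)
Your proposal is correct and follows essentially the same route as the paper's proof: reduce the model to a convex quadratic program with linear constraints (the paper calls your epigraph linearization a ``standard technique''), check that the feasible set is nonempty and the objective is bounded below by zero, and invoke Lemma \ref{lemma:thm 1.1 Kim2012}. You merely make explicit two details the paper leaves implicit, namely the auxiliary-variable reformulation of the $\ell_1$ term and the scaling argument converting strict separability into feasibility of the hard-margin constraints.
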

\begin{proof}
The model (\ref{formulation:SQSSVM-vecL1}) is equivalent to  (\ref{formulation:SQSSVM-vectorL1-vec}), which reduces to a convex quadratic program with linear constraints by a standard technique. Given a  data set $\mathcal D$, the feasible set is nonempty for an arbitrary $\bm z$ and $c$ with  $$\bm \xi_i=\max\left[0, 1- \datayi\lp \bm z^T \datari + c \rp\right],  \quad i = 1, \dots, m. $$ Further, the objective function in (\ref{formulation:SQSSVM-vectorL1-vec}) is bounded below by zero over the feasible set. Hence, Lemma \ref{lemma:thm 1.1 Kim2012} implies that (\ref{formulation:SQSSVM-vectorL1-vec}) has an optimal solution with a finite objective value and so does (\ref{formulation:SQSSVM-vectorL1-vec}). The same idea applies to the model (\ref{formulation:QSSVM-vecl1-vec}). The second statement in the theorem follows from a similar argument.

\end{proof}

We next show that if $\Bf G$ is positive definite, then $\bm z^*$ must be unique.  One can prove this using tedious rewriting of (\ref{formulation:SQSSVM-vectorL1-vec}) as a standard quadratic program and then exploiting  gradient uniqueness, nonetheless we provide a direct proof below.
\begin{theorem}[$\bm z$-Uniqueness]  \label{theorem:z_uniqueness}
Assume that the matrix $\Bf  G$ defined  in (\ref{eq:G_definition}) is positive definite. Then, the solution $\bm z^*$ is unique for the model (\ref{formulation:SQSSVM-vectorL1-vec}).
\end{theorem}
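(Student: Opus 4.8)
The plan is to exploit the strict convexity that the positive definiteness of $\Bf G$ injects into the $\bm z$-block of the objective, and to run the standard ``convex combination of two optimizers'' argument. First I would record the two structural facts that make this work: the feasible set of (\ref{formulation:SQSSVM-vectorL1-vec}) is convex because all the constraints $\datayi\lp \bm z^T\datari + c\rp\geqslant 1-\bm\xi_i$ together with $\bm\xi\geqslant \bm 0_m$ are affine in $(\bm z, c, \bm\xi)$; and the objective $F(\bm z,c,\bm\xi) = \frac12\bm z^T\Bf G\bm z + \lambda\|\Bf V\bm z\|_1 + \mu\sum_{i=1}^m\bm\xi_i$ is convex, being the sum of a convex quadratic, a composition of the convex $\ell_1$ norm with a linear map, and a linear term.

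Next I would argue by contradiction. Suppose $(\bm z_1, c_1, \bm\xi_1)$ and $(\bm z_2, c_2, \bm\xi_2)$ are both optimal with common optimal value $p^*$, but $\bm z_1\neq \bm z_2$. Consider their midpoint $(\bar{\bm z}, \bar c, \bar{\bm\xi}) = \frac12(\bm z_1+\bm z_2,\, c_1+c_2,\, \bm\xi_1+\bm\xi_2)$, which is feasible by convexity of the feasible set. The crux is the elementary quadratic identity
$$\tfrac12\bar{\bm z}^T\Bf G\bar{\bm z} \, = \, \tfrac12\lp\tfrac12\bm z_1^T\Bf G\bm z_1\rp + \tfrac12\lp\tfrac12\bm z_2^T\Bf G\bm z_2\rp - \tfrac18(\bm z_1-\bm z_2)^T\Bf G(\bm z_1-\bm z_2),$$
so that $\Bf G\succ 0$ and $\bm z_1\neq \bm z_2$ force the quadratic term at the midpoint to lie \emph{strictly} below the average of its two endpoint values. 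For the remaining terms I would only use ordinary convexity: $\|\Bf V\bar{\bm z}\|_1 \leqslant \frac12\|\Bf V\bm z_1\|_1 + \frac12\|\Bf V\bm z_2\|_1$ by the triangle inequality and homogeneity, while the slack term is linear and hence splits with equality. Summing these contributions yields $F(\bar{\bm z},\bar c,\bar{\bm\xi}) < \frac12 p^* + \frac12 p^* = p^*$, contradicting the optimality of $p^*$; therefore $\bm z_1 = \bm z_2$.

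I do not anticipate a genuine obstacle here, since this is a convex program and uniqueness of the quadratic block is precisely what strict convexity buys. The one point that deserves care is that the $\ell_1$ penalty and the slack term are only non-strictly convex, so they cannot by themselves pin down $\bm z$, $c$, or $\bm\xi$; the argument must therefore route the strict inequality entirely through the quadratic form, which explains why the theorem asserts uniqueness of $\bm z^*$ alone and says nothing about $c$ or $\bm\xi$. It is also worth emphasizing that only affineness of the constraints (to keep the midpoint feasible) and $\Bf G\succ 0$ (to keep the quadratic strictly convex) are actually used, so the identical proof applies verbatim to (\ref{formulation:QSSVM-vecl1-vec}) on any linearly or quadratically separable data set.
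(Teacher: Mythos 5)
Your proof is correct and takes essentially the same approach as the paper's: both run the convex-combination argument on two hypothetical optima and route the strict conclusion entirely through the positive definite quadratic form $\frac{1}{2}\bm z^T \Bf G \bm z$, handling the $\ell_1$ and slack terms by plain convexity. The only cosmetic difference is that you derive a strict improvement at the midpoint and contradict optimality directly, whereas the paper equates the objective along the segment (using convexity of the optimal set) and then forces the two nonnegative quantities, $(\bm z^*-\bar{\bm z})^T \Bf G (\bm z^*-\bar{\bm z})$ and the $\ell_1$ triangle-inequality gap, to vanish.
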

\begin{proof}
Assume that $(\bm z^*, c^*,\bm \xi^*)$ and $(\bm {\bar z}, \bar c,\bm {\bar \xi})$ are two optima of (\ref{formulation:SQSSVM-vectorL1-vec}) such that $\bm z^*\ne \bm {\bar z}$. Since this problem is convex, its optimal solution set is convex. Thus, any convex combination of $(\bm z^*, \bm c^*,\bm \xi^*)$ and $(\bm  {\bar z}, \bar c,\bm {\bar \xi})$ results in the same optimal value. Let
\[
 q(\bm z,\bm \xi):=\frac{1}{2} {\bm z}^T \Bf G {\bm z} \, + \, \lambda \|\Bf V \bm z\|_1 \, +  \, \mu \sum_{i=1}^m  \bm \xi_i.
\]
 Thus, for $\delta\in (0,1)$, we have $$q\left(\delta \bm z^*+(1-\delta) \bm  {\bar z}, \delta \bm \xi^*+(1-\delta) \bm {\bar \xi}\right)=\delta q(\bm z^*, \bm \xi^*)+(1-\delta)q(\bm  {\bar z}, \bm {\bar \xi}).$$ Equivalently,

\resizebox{0.86\textwidth}{!}{
\begin{minipage}{\linewidth}
\begin{equation*}
 \frac{1}{2} ({ \delta \bm z^*+(1-\delta) \bm {\bar z}})^T \Bf  G ({\delta \bm z^*+(1-\delta)\bm  {\bar z}})  +  \lambda \| \delta \Bf V\bm z^*+(1-\delta)\Bf V \bm {\bar z}\|_1  +  \mu \sum_{i=1}^m (\delta  \bm \xi^*_i+(1-\delta) \bm {\bar \xi}_i),
\end{equation*}
\end{minipage}
}
\newline
is equal to 
\newline
\resizebox{0.85\textwidth}{!}{
\begin{minipage}{\linewidth}
\begin{equation*}
 \frac{\delta}{2} {\bm z^*}^T \Bf  G \bm z^* + \lambda \|\delta \Bf V \bm z^*\|_1 +  \mu \sum_{i=1}^m \delta  \bm \xi^*_i
+
\frac{(1-\delta)}{2} {\bm  {\bar z}}^T \Bf  G \bm  {\bar z} + \lambda \|(1-\delta) \Bf V \bm  {\bar z}\|_1 +  \mu \sum_{i=1}^m (1-\delta)  \bm {\bar \xi}_i.
\end{equation*}
\end{minipage}
}
\newline
By simple calculations, we get
\newline
\resizebox{0.89\textwidth}{!}{
\begin{minipage}{\linewidth}
\begin{equation*}
\frac{\delta^2-\delta}{2}\Bigg[{\bm z^*}^T \Bf G \bm z^{*}-2{\bm z^*}^T \Bf  G \bm  {\bar z}+{\bm  {\bar z}}^T \Bf  G \bm  {\bar z}\Bigg]+
\lambda \Bigg[ \| \delta \Bf V\bm z^*+(1-\delta)\Bf V \bm  {\bar z}\|_1-  \|\delta \Bf V \bm z^*\|_1 -    \|(1-\delta) \Bf V \bm {\bar z}\|_1\Bigg]=0,
\end{equation*}
\end{minipage}
}
\newline
which further means that
\newline
\resizebox{0.89\textwidth}{!}{
\begin{minipage}{\linewidth}
\begin{equation*}
\frac{\delta-\delta^2}{2}\Bigg[({\bm z^*-\bm  {\bar z}})^T \Bf  G ({\bm z^*-\bm  {\bar z}})\Bigg]+
\lambda \Bigg[
\|\delta \Bf V \bm z^*\|_1+    \|(1-\delta) \Bf V \bm z^*\|_1
-\| \delta \Bf V\bm z^*+(1-\delta)\Bf V \bm  {\bar z})\|_1
\Bigg]
=0.
\end{equation*}
\end{minipage}
}
\newline
Since $\delta\in (0,1)$ and $\lambda>0$,  the  positive definiteness of $\Bf G$ along with the Cauchy-Schwartz inequality imply that $({\bm z^*-\bm  {\bar z}})^T \Bf G ({\bm z^*-\bm  {\bar z}})=0$ and $ \| \delta \Bf V\bm z^*+(1-\delta)\Bf V \bm  {\bar z})\|_1=\|\delta \Bf V \bm z^*\|_1+    \|(1-\delta) \Bf V \bm {\bar z}\|_1.$ The former equation yields $\bm z^*=\bm  {\bar z}$.
\end{proof}
We therefore investigate the conditions under which the matrix  $\Bf  G$ is positive definite below.
\begin{theorem}[Positive-Definiteness of $\Bf G$]\label{th:pd_G}
Let (A1) hold. Also, suppose that

\vspace{3mm}

\begin{itemize}
  \item[(A2)] \centering \quad $\mathbf 1_m$ is not in the column space of $\Bf X$.
\end{itemize}

\vspace{3mm}

Then, the matrix $\Bf G$ defined in (\ref{eq:G_definition}) is positive definite.
\end{theorem}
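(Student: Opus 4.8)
The plan is to exploit the already-noted fact that $\Bf G \succeq 0$, so that establishing positive definiteness reduces to ruling out a nontrivial kernel: I would show that $\bm z^T \Bf G \bm z = 0$ forces $\bm z = \bm 0$. Recall from the computation preceding (\ref{matrix G}) that $\tfrac{1}{2}\bm z^T \Bf G \bm z = \sum_{i=1}^m \|\Bf W \dataxi + \bm b\|_2^2$, where $\bm z = [\bm w; \bm b]$ and $\bm w = \hv(\Bf W)$. Hence $\bm z^T \Bf G \bm z = 0$ is equivalent to the system $\Bf W \dataxi + \bm b = \bm 0_n$ holding for every $i = 1, \dots, m$.

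The core step is to extract $\Bf W = \Bf 0$ and $\bm b = \bm 0$ from these $m$ vector equations, and this is precisely where assumptions (A1) and (A2) enter. Using the symmetry of $\Bf W$, transposing each equation to $(\dataxi)^T \Bf W = -\bm b^T$ and stacking over $i$ yields the single matrix identity $\Bf X \Bf W = -\mathbf 1_m \bm b^T$. Reading this column by column, for each $k = 1, \dots, n$ the $k$-th column $\bm w_k$ of $\Bf W$ satisfies $\Bf X \bm w_k = -b_k \mathbf 1_m$, where $b_k$ denotes the $k$-th entry of $\bm b$.

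I would then argue as follows. If some $b_k \neq 0$, then $\mathbf 1_m = -\tfrac{1}{b_k}\Bf X \bm w_k$ lies in the column space of $\Bf X$, contradicting (A2); therefore $\bm b = \bm 0$. The relations then collapse to $\Bf X \bm w_k = \bm 0$ for all $k$, and since (A1) guarantees that $\Bf X$ has full column rank and hence trivial null space, we obtain $\bm w_k = \bm 0$ for every $k$, i.e. $\Bf W = \Bf 0$. Consequently $\bm w = \hv(\Bf W) = \bm 0$ and $\bm b = \bm 0$, so $\bm z = \bm 0$, which is exactly the condition that makes $\Bf G$ positive definite.

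The argument is short, and the only genuine subtlety is the reformulation step: correctly transposing the per-sample conditions into the matrix equation $\Bf X \Bf W = -\mathbf 1_m \bm b^T$ and then decoupling it column-wise, so that (A2) and (A1) can be applied in sequence (first to annihilate $\bm b$, then to annihilate $\Bf W$). I expect no analytic difficulty beyond this bookkeeping; in particular, no Schur-complement machinery (Lemma \ref{lm:pd_partitioned}) is required for this direct route, although one could alternatively verify positive definiteness of the partitioned Gram matrix that way.
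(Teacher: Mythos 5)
Your proof is correct, but it takes a genuinely different and more elementary route than the paper. The paper invokes the Schur-complement criterion (Lemma \ref{lm:pd_partitioned}) on the block form of $\Bf G$ in (\ref{eq:G_definition}), reduces via the full column rank of the duplication matrix $\Bf D_n$ to positive definiteness of $\sum_{i=1}^{m}(\dataXi)^T\dataXi - \frac{1}{m}\bigl(\sum_{i=1}^{m}\dataXi\bigr)^T\bigl(\sum_{i=1}^{m}\dataXi\bigr)$, and then runs a Cauchy--Schwarz/AM--GM estimate on partitioned vectors $\bm v \in \R^{n^2}$ whose equality case is excluded by (A1) and (A2). You instead exploit the identity $\tfrac{1}{2}\bm z^T \Bf G \bm z = \sum_{i=1}^m \|\Bf W \dataxi + \bm b\|_2^2$ (valid for every $\bm z$, since half-vectorization is a bijection onto $\mathbb S_n$, so each $\bm z$ corresponds to a unique symmetric $\Bf W$), note that $\Bf G \succeq 0$ makes positive definiteness equivalent to a trivial kernel, and kill the kernel directly: the matrix identity $\Bf X \Bf W = -\mathbf 1_m \bm b^T$, read column by column, lets (A2) force $\bm b = \bm 0$ and then (A1) force $\Bf W = \bm 0$. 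Both proofs ultimately rest on the same fact---that $\Bf X \bm v = c\,\mathbf 1_m$ forces $\bm v = \bm 0$ under (A1) and (A2)---but your argument reaches it without the Schur complement or the equality-case analysis of Cauchy--Schwarz, and it makes transparent exactly where each assumption enters. What the paper's longer route buys is an explicit intermediate statement, namely positive definiteness of the ``centered'' Gram matrix obtained after eliminating the $\bm b$-block, which is a reusable fact in its own right; your route buys brevity and a cleaner interpretation at the level of the model quantities $(\Bf W, \bm b)$ rather than their vectorizations.
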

\begin{proof}
The proof can be found in Appendix \ref{proof:thm:pd_G}.

\end{proof}
We next show that the set of $\Bf X$'s for which the associated matrix $\Bf  G$ is not positive definite is of Lebesgue measure zero in $\R^{m\times n}$. To be more specific, we define the following set:
\begin{equation}\label{eq:pd_data_set}
S \, \triangleq \, \{ \Bf  X \in \R^{m\times n}\  | \  \Bf {G} \mbox{ is positive definite}\}.
\end{equation}
\begin{theorem} \label{th:surely_pd_G}
Let $m\ge n+1$. The complement of $S$ defined in Equation (\ref{eq:pd_data_set}) is of Lebesgue measure zero in $\R^{m\times n}$.
\end{theorem}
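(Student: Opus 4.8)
The plan is to bootstrap off Theorem~\ref{th:pd_G}, which guarantees $\Bf G \succ 0$ whenever both (A1) and (A2) hold. Taking the contrapositive, if $\Bf X \in S^c$, i.e.\ $\Bf G$ is not positive definite, then at least one of (A1), (A2) must fail. Hence, writing $A_1 \triangleq \{\Bf X \in \R^{m\times n} : \text{(A1) fails}\}$ and $A_2 \triangleq \{\Bf X \in \R^{m \times n} : \text{(A2) fails}\}$, we have $S^c \subseteq A_1 \cup A_2$. Since a finite union of Lebesgue-null sets is null, it suffices to prove that each of $A_1$ and $A_2$ has measure zero in $\R^{m\times n} \cong \R^{mn}$.

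The common tool for both is the standard fact that the zero set of a real multivariate polynomial that is not identically zero is a Lebesgue-null set. Accordingly, for each $A_k$ I would (i) express the failure of the corresponding assumption as the simultaneous vanishing of a family of minors of a matrix whose entries are affine in the entries of $\Bf X$, and then (ii) exhibit one member of that family which is not the zero polynomial. Since $A_k$ is then contained in the zero locus of that single nonzero polynomial, it is automatically null.

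For $A_1$: assumption (A1) fails exactly when $\mathrm{rank}(\Bf X) < n$, equivalently when every $n \times n$ minor of $\Bf X$ vanishes. Because $m \ge n$, the minor $p(\Bf X)$ formed from rows $1, \dots, n$ is the determinant of a generic $n\times n$ matrix, hence a nonzero polynomial in the entries of $\Bf X$ (it equals $1$ when those rows form $\mathbf I_n$); thus $A_1 \subseteq \{p = 0\}$ is null. For $A_2$: assumption (A2) fails exactly when $\mathbf 1_m \in \mathrm{col}(\Bf X)$. Forming the augmented matrix $\Bf Y \triangleq [\,\Bf X \mid \mathbf 1_m\,] \in \R^{m\times(n+1)}$, this membership forces $\mathrm{rank}(\Bf Y) \le n$, so that all $(n+1)\times(n+1)$ minors of $\Bf Y$ vanish---and such minors exist precisely because $m \ge n+1$, which is where the hypothesis is used. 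Each such minor is a polynomial in the entries of $\Bf X$ (the last column of $\Bf Y$ being the constant $\mathbf 1_m$). To see at least one is not identically zero, I would evaluate the minor on rows $1,\dots,n+1$ at $\Bf X = [\,\bm e_1 \ \cdots \ \bm e_n\,]$: the resulting $(n+1)\times(n+1)$ block is $\left[\begin{smallmatrix} \mathbf I_n & \mathbf 1_n \\ \mathbf 0^T & 1 \end{smallmatrix}\right]$, with determinant $1$. Hence $A_2$ lies in the zero locus of a nonzero polynomial and is null, and combining the two cases completes the argument.

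I expect the only genuinely delicate points to be bookkeeping rather than conceptual: one must pick witness configurations certifying that the chosen minors are nonzero polynomials (the explicit $\bm e_1,\dots,\bm e_n$ choice does this cleanly for both $A_1$ and $A_2$), and one must invoke $m \ge n+1$ at exactly the spot guaranteeing that $(n+1)\times(n+1)$ minors of $\Bf Y$ exist. The polynomial-null-set lemma itself is standard and can be cited or proved by a short induction on the ambient dimension.
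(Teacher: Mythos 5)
Your proof is correct and follows essentially the same route as the paper's: both reduce to Theorem~\ref{th:pd_G}, express the failure of (A1)/(A2) through vanishing minors of the augmented matrix $[\Bf X \mid \mathbf 1_m]$, and conclude via the fact that the zero locus of a not-identically-zero polynomial is Lebesgue-null. The only cosmetic difference is that the paper packages (A1) and (A2) into the single condition that the augmented matrix have full column rank and intersects the zero sets of all $(n+1)\times(n+1)$ minors, whereas you split the exceptional set into the union $A_1\cup A_2$ and certify one explicit nonzero minor for each piece.
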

\begin{proof}
The proof can be found in Appendix \ref{proof:thm:surely_pd_G}.
\end{proof}

While the uniqueness of $c^*$ and $\bm\xi^*$ in \ref{formulation:SQSSVM-vectorL1-vec} is generally not guaranteed, we have the following observation. Let $(\bm z^*, \bm \xi^*, c^*)$ be an optimal solution of (\ref{formulation:SQSSVM-vectorL1-vec}), it is clear that when $\bm z^*$ is given, $(c^*, \bm \xi^*)$ must solve the following linear program.
      \begin{equation}\label{eq:z-xi-LP}
       \begin{aligned}
          (c^*, \bm \xi^*) \in \argmin_{c, \bm \xi} \quad &  \sum_{i=1}^m \bm \xi_i \\
          s.t. \quad & \datayi\lp f_i + c \rp \geqslant 1-  \bm \xi_i, \quad i = 1, \dots, m, \\
           & c \in \R, \, \bm \xi \in \R^m_+,
       \end{aligned}
      \end{equation}
where $f_i = \left(\bm z^*\right)^T \datari$. The uniqueness of $c^*$ and $\bm \xi^*$ is therefore related to the solution uniqueness of linear program (\ref{eq:z-xi-LP}). The readers can refer to \cite{mangasarian1979uniqueness}, for results regarding solution uniqueness of linear programs. Next, we provide some conditions  under which the solution of \ref{formulation:SQSSVM-vecL1} is unique.

\subsection{Vanishing Margin $\bm \xi$ in Quadratically Separable Case}

As we mentioned, the training data may contain noise, and hence not separable. We include soft margin $\bm \xi$ to handle this situation. In principle, when the data is separable, the soft margin should vanish in the solution. In this subsection, we show that $\bm \xi$ does vanish when the data is quadratically separable, if $\mu$ is large enough. For convenience, we consider the following equivalent formulations for (\ref{formulation:QSSVM-vecl1}) and (\ref{formulation:SQSSVM-vecL1}).

\begin{equation} \tag{L1-QSSVM\textprime\textprime}\label{formulation: L1-QSSVM-vecl1-wbc}
       \begin{aligned}
          \min_{\bm w, \bm b, c} \quad &  q(\bm w, \bm b, c) \, \triangleq \,  \sum_{i=1}^m {\bm w}^T {\dataMi}^T \dataMi {\bm w} + 2 \sum_{i=1}^m {\bm w}^T {\dataMi}^T  {\bm b}  + m \bm b^T \bm b + \lambda \|\bm w\|_1 \\
          s.t. \quad & \datayi\lp \bm w^T \datasi + \bm b^T \dataxi + c \rp \, \geqslant \, 1, \quad i = 1, \dots, m, \\
           & \bm w \in \R^{\frac{n(n+1)}{2}}, \, \bm b \in \R^n, \, c \in \R.
       \end{aligned}
\end{equation}

\begin{equation} \tag{L1-SQSSVM\textprime\textprime}\label{formulation: L1-SQSSVM-vecl1-wbc}
       \begin{aligned}
          \min_{\bm w, \bm b, c, \bm \xi} \quad &  \sum_{i=1}^m {\bm w}^T {\dataMi}^T \dataMi {\bm w} + 2 \sum_{i=1}^m {\bm w}^T {\dataMi}^T  {\bm b}  + m \bm b^T \bm b + \lambda \|\bm w\|_1 + \mu \sum_{i=1}^m  \bm \xi_i\\
          s.t. \quad & \datayi\lp \bm w^T \datasi + \bm b^T \dataxi + c \rp \, \geqslant \, 1 - \bm \xi_i, \quad i = 1, \dots, m, \\
           & \bm w \in \R^{\frac{n(n+1)}{2}}, \, \bm b \in \R^n, \, c \in \R, \, \bm \xi \in \R^{m}_+.
       \end{aligned}
\end{equation}

It is clear that when the training data is quadratically separable, the hard margin model (\ref{formulation: L1-QSSVM-vecl1-wbc}) is feasible and has an optimal solution $(\bm w^*, \bm b^*, c^*)$. Therefore, according to (\ref{eq:nonsmooth_opt_KKT}), there exists a multiplier vector $\bm \alpha^* \in \R^m$ so that $(\bm w^*, \bm b^*, c^*, \bm \alpha^*)$ satisfies the following KKT conditions:
 \begin{equation}\label{KKT_L1_QSSVM}
   \left\{
     \begin{aligned}
      &\bm  0 \in  2 \sum_{i=1}^m {\dataMi}^T \dataMi \bm w^* + 2 \sum_{i=1}^m {\dataMi}^T \bm b^*  + \lambda  \partial\left.( \|\cdot\|_1)\right|_{\bm w^*} - \sum_{i=1}^m  \bm \alpha_i^* \datayi \datasi,\\
      & 2\sum_{i=1}^m \dataMi \bm w^* + 2m \bm b^* = \sum_{i=1}^{m}  \bm \alpha_i^* \datayi \dataxi, \\
      & \sum_{i=1}^{m} \bm \alpha_i^* \datayi = 0, \\
      &  \bm \alpha^*_i \lp 1-\datayi \lp {\bm w^*}^T \datasi + {\bm b^*}^T \dataxi + c^* \rp \rp = 0, \quad  i = 1, \dots, m, \\
      & \bm \alpha^*\geqslant \bm 0 \\
      & 1-\datayi \lp {\bm w^*}^T \datasi + {\bm b^*}^T \dataxi + c^* \rp \leqslant 0,
      \quad  i = 1, \dots, m .\\
     \end{aligned}
   \right.
   \end{equation}
Note that (\ref{eq:quad_separable}) implies that (\ref{formulation: L1-QSSVM-vecl1-wbc}) has strictly feasible solutions, i.e., there exists $(\overline{\bm w}, \overline{\bm b}, \overline c)$ such that
$$\datayi\lp \overline{\bm w}^T \datasi + \overline{\bm b}^T \dataxi + \overline c \rp \, > \, 1, \quad i = 1, \dots, m.$$
Therefore, by Exercise 5.3.1 in \cite{bertsekas1997nonlinear}, we know that the set of Lagrangian multipliers $\alpha^*$ is bounded from above. In particular, we have
\begin{equation}\label{eq:multiplier_upper_bound}
\|\bm \alpha^*\|_1 \, \leqslant \, \pi \, \triangleq \, \frac{q(\overline{\bm w},\overline{\bm b}, \overline c)-q^*}{c_1},
\end{equation}
where $q^*$ is the optimal value of (\ref{formulation: L1-QSSVM-vecl1-wbc}) and $$c_1 = \displaystyle{\min_{i = 1, \dots, m} } \bigg[\datayi \lp \overline{ \bm w}^T \datasi + \overline { \bm b}^T \dataxi + \overline c\rp-1\bigg].$$

\begin{theorem}  \label{theorem:xi_0_mu_large}
Suppose that the data set $\mathcal D$ is quadratically separable and the matrix $\Bf G$ is positive definite. For any $\lambda$, there exists a $\underline{\mu}$ (depending on $\lambda$), such that for all $\mu>\underline{\mu}$, (\ref{formulation: L1-SQSSVM-vecl1-wbc}) has a unique solution $(\bm w^*, \bm b^*, c^*, \bm \xi^*)$ with $\bm \xi^*=\bm 0_m$.
\end{theorem}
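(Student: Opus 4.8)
The plan is to show that any optimal solution of the hard-margin model (\ref{formulation: L1-QSSVM-vecl1-wbc}) extends, by appending a zero slack vector, to an optimal solution of the soft-margin model (\ref{formulation: L1-SQSSVM-vecl1-wbc}) as soon as $\mu$ exceeds the multiplier bound $\pi$ recorded in (\ref{eq:multiplier_upper_bound}). Since $\mathcal D$ is quadratically separable, (\ref{formulation: L1-QSSVM-vecl1-wbc}) is feasible and, because $\Bf G \succ 0$ makes its objective strictly convex in $\bm z$, it attains an optimal solution $(\bm w^*, \bm b^*, c^*)$ together with a multiplier $\bm\alpha^*$ satisfying the KKT system (\ref{KKT_L1_QSSVM}) and $\|\bm\alpha^*\|_1 \le \pi$. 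I would set $\underline\mu \triangleq \pi$; since $\pi$ is built from $q$, it depends on $\lambda$, exactly as the statement demands.

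The central step is the multiplier construction. For $\mu > \underline\mu$, put $\bm\xi^* = \bm 0_m$ and take $\bm\beta_i^* \triangleq \mu - \bm\alpha_i^*$ as the multipliers for the sign constraints $\bm\xi_i \ge 0$. I would then check that $(\bm w^*, \bm b^*, c^*, \bm\xi^*)$ with $(\bm\alpha^*, \bm\beta^*)$ fulfills the KKT conditions of (\ref{formulation: L1-SQSSVM-vecl1-wbc}): stationarity in $\bm w$, $\bm b$, and $c$ is inherited verbatim from (\ref{KKT_L1_QSSVM}) because those blocks of the objective and constraints are unchanged; the new stationarity in $\bm\xi$ is $\mu - \bm\alpha_i^* - \bm\beta_i^* = 0$, which holds by definition of $\bm\beta^*$; the margin complementarity carries over (with $\bm\xi^* = \bm 0_m$), and $\bm\beta_i^*\bm\xi_i^* = 0$ is immediate; primal feasibility holds since the hard-margin inequalities give $\datayi({\bm w^*}^T\datasi + {\bm b^*}^T\dataxi + c^*) \ge 1 = 1 - \bm\xi_i^*$. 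The only condition involving $\mu$ is dual feasibility of $\bm\beta^*$, and $\bm\beta_i^* = \mu - \bm\alpha_i^* \ge \mu - \pi > 0$ precisely because $\mu > \underline\mu = \pi$. As (\ref{formulation: L1-SQSSVM-vecl1-wbc}) is convex, satisfying these subgradient conditions in the sense of (\ref{eq:nonsmooth_opt_KKT}) is sufficient for global optimality, so $(\bm w^*, \bm b^*, c^*, \bm 0_m)$ solves (\ref{formulation: L1-SQSSVM-vecl1-wbc}).

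For uniqueness, Theorem \ref{theorem:z_uniqueness} (again using $\Bf G \succ 0$) tells me the $\bm z$-part, and hence the pair $(\bm w^*, \bm b^*)$, is shared by every optimal solution. The constructed optimum has value $\frac{1}{2}{\bm z^*}^T\Bf G\bm z^* + \lambda\|\bm w^*\|_1$ with no slack contribution, so, the $\bm z$-part being fixed, each optimal solution must satisfy $\mu\sum_i \bm\xi_i = 0$, which forces $\bm\xi^* = \bm 0_m$ since $\mu > 0$. To pin down $c^*$, I would first rule out $\bm\alpha^* = \bm 0$: were it zero, (\ref{KKT_L1_QSSVM}) would make $(\bm w^*, \bm b^*)$ the unconstrained minimizer of $q$, which is the origin because $\Bf G \succ 0$ yields $q \ge 0$ with equality only at $\bm w = \bm 0, \bm b = \bm 0$; but the origin violates separation, a contradiction. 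Given $\bm\alpha^* \ge \bm 0$, $\bm\alpha^* \ne \bm 0$, and $\sum_i \bm\alpha_i^*\datayi = 0$, there is a strictly positive multiplier in each class, so complementarity activates constraints at some $i_+ \in \mathcal M^+$ and $i_- \in \mathcal M^-$; the former yields $c^* = 1 - ({\bm w^*}^T\bm s^{(i_+)} + {\bm b^*}^T\bm x^{(i_+)})$, determined by the already-unique $(\bm w^*, \bm b^*)$, while the two active constraints squeeze the admissible interval for $c$ down to this single value.

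I expect the genuine obstacle to be the uniqueness of $c^*$, not the vanishing of $\bm\xi^*$: the objective is independent of $c$, so a priori $c$ may float within the feasibility interval, and collapsing that interval to a point relies on the active-constraint argument, which itself depends on excluding the degenerate all-zero multiplier through $\Bf G \succ 0$ and the infeasibility of the origin. By contrast, verifying the subgradient stationarity and the sufficiency of the nonsmooth KKT conditions is routine, though the nondifferentiable term $\lambda\|\bm w\|_1$ requires working with $\partial(\|\cdot\|_1)$ rather than a gradient throughout.
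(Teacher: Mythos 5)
Your proposal is correct and follows essentially the same route as the paper's proof: both start from a hard-margin optimum of (\ref{formulation: L1-QSSVM-vecl1-wbc}) with KKT multipliers $\bm\alpha^*$ bounded via (\ref{eq:multiplier_upper_bound}), take the threshold $\underline\mu$ at the bound $\pi$, construct the slack multipliers as $\mu - \bm\alpha_i^*$, verify the soft-margin KKT system (\ref{KKT_L1_SQSSVM}), and obtain uniqueness of $(\bm w^*,\bm b^*)$ from Theorem \ref{theorem:z_uniqueness} and of $\bm\xi^*=\bm 0_m$ from the fact that every optimum shares the $\bm z$-part of the zero-slack solution. The one place you go beyond the paper is the uniqueness of $c^*$, which the paper outsources to ``an argument similar to Theorem 5 in \cite{luo2016soft}'': your self-contained version --- ruling out $\bm\alpha^*=\bm 0$ because the origin is infeasible when both classes are nonempty, then using $\sum_{i=1}^m \bm\alpha_i^*\datayi=0$ to produce active constraints with positive multipliers in both classes, which pinch the feasible interval for $c$ to a single point --- is correct and makes that step explicit.
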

\begin{proof}
We start with an optimal solution of (\ref{formulation: L1-QSSVM-vecl1-wbc}), say $(\bm w^*, \bm b^*, c^*)$ and then construct an optimal solution of (\ref{formulation: L1-SQSSVM-vecl1-wbc}). It is clear that the KKT conditions for convex problem (\ref{formulation: L1-SQSSVM-vecl1-wbc}) yields the necessary and sufficient conditions for optimality. Let $\bm \alpha^*$ be a vector of multipliers such that $(\bm w^*, \bm b^*, c^*, \bm \alpha^*)$ satisfies (\ref{KKT_L1_QSSVM}). Let
$$\underline \mu > \pi,$$ where $\pi$ is defined in (\ref{eq:multiplier_upper_bound}). Notice that $\pi$ depends on $\lambda$. It is clear that $\underline \mu > \| \bm \alpha^*\|_\infty$. For any $\mu> \underline \mu$, we let $\bm \eta^* = \mu \bm 1_m - \bm \alpha^*$. By the definition of $\underline \mu$, it is clear that $\bm \eta^*\geqslant 0$. It suffices to verify that $(\bm w^*, \bm b^*, c^*, \bm \xi^* = \bm 0_m, \bm \alpha^*, \bm \eta^*)$ satisfies the following KKT conditions:
 \begin{equation}\label{KKT_L1_SQSSVM}
   \left\{
     \begin{aligned}
      &\bm  0 \in  2 \sum_{i=1}^m {\dataMi}^T \dataMi \bm w^* + 2 \sum_{i=1}^m {\dataMi}^T \bm b^*  + \lambda  \partial\left.( \|\cdot\|_1)\right|_{\bm w^*} - \sum_{i=1}^m  \bm \alpha_i^* \datayi \datasi, \\
      & 2\sum_{i=1}^m \dataMi \bm w^* + 2m \bm b^* = \sum_{i=1}^{m} \bm \alpha_i^* \datayi \dataxi, \\
      & \sum_{i=1}^{m}   \bm \alpha_i^* \datayi = 0, \\
      & \bm \alpha^*_i \lp 1- \bm \xi_i^* - \datayi \lp {\bm w^*}^T \datasi + {\bm b^*}^T \dataxi + c^* \rp \rp = 0, \quad i = 1, \dots, m, \\
      & \bm \alpha^*\geqslant \bm 0, \\
      & 1- \bm \xi_i^* - \datayi \lp {\bm w^*}^T \datasi + {\bm b^*}^T \dataxi + c^* \rp \leqslant 0,
      \quad  i = 1, \dots, m,\\
    &  \bm \xi^*_i \bm \eta^*_i=0   \quad i = 1, \dots, m, \\
     & \bm \xi^*\geqslant \bm 0, \\
     & \bm\eta^*\geqslant \bm 0, \\
     & \mu = \bm \alpha^*_i + \bm \eta^*_i,  \quad  i = 1, \dots, m.
     \end{aligned}
   \right.
   \end{equation}
Since $(\bm w^*, \bm b^*, c^*, \bm \alpha^*)$ satisfies (\ref{KKT_L1_QSSVM}), it is straightforward to verify that $(\bm w^*, \bm b^*, c^*, \bm \xi^* =\bm 0_m, \bm \alpha^*, \bm \eta^*)$ satisfies (\ref{KKT_L1_SQSSVM}) and hence is an optimal solution of \ref{formulation: L1-SQSSVM-vecl1-wbc}. Since $\Bf G$ is positive definite, we know that $\bm w^*$ and $\bm b^*$ are unique by Theorem~\ref{theorem:z_uniqueness}. And since $(c^*, \bm \xi^*)$ must be an optimal solution of linear program (\ref{eq:z-xi-LP}), we conclude that $\bm \xi^*=\bm 0_m$ is the only solution. By an argument similar to Theorem 5 in \cite{luo2016soft}, $c^*$ is also unique. This concludes the proof.
\end{proof}

\begin{remark}
Theorem~\ref{theorem:xi_0_mu_large} can be extended to the original SQSSVM models studied in \cite{luo2016soft}, leading to a result stronger than Theorem 2 therein for which the parameter $\mu$ must go to infinity.
\end{remark}

\subsection{Effects of $\ell_1$ Norm Regularization}
We study next the effects of $\ell_1$ norm regularization. First, we consider the case when the training data set $\mathcal D$ defined in  (\ref{def:dataset D}) is linearly separable according to (\ref{eq:linear_separable}). As we have discussed, in this case, it is desirable that the \ref{formulation:QSSVM} returns a separation hyperplane rather than a quadratic surface, but there is no guarantee of such property for the model. On the other hand, our proposed model (\ref{formulation:QSSVM-vecl1}) captures this desired property for a finite  but large enough $\lambda$. Mathematically, this is equivalent to $\bm w^*=\bm 0$ in an optimal solution of (\ref{formulation:QSSVM-vecl1}) for some $\lambda$. To show this, we start with an optimal solution of the standard SVM. Consider the following standard SVM with hard margin:
\begin{equation} \tag{SVM}\label{formulation: SVM-hard margin}
 \begin{aligned}
  \min_{\bm u, d} \quad & \frac{1}{2} \|\bm u\|^2_2 \\
  s.t. \quad & \datayi \lp \bm u^T \dataxi + d \rp \geqslant 1, \quad i = 1, \dots, m, \\
  & \bm u \in \R^n, d \in \R.
 \end{aligned}
\end{equation}
Under the linear separability assumption (\ref{eq:linear_separable}), the standard SVM with hard margin (\ref{formulation: SVM-hard margin}) is feasible, and the objective function is bounded from below. Therefore, there exists an optimal solution $(\bm u^*, d^*)$ that solves (\ref{formulation: SVM-hard margin}).  The Lagrangian of (\ref{formulation: SVM-hard margin}) is the following:
 \begin{equation}\label{L_SVM}
   \mathcal {L_{\text{SVM}}} \lp \bm u, d, \bm \beta \rp = \frac{1}{2} \|\bm u\|_2^2 + \sum_{i=1}^m  \bm \beta_i \lp 1-\datayi \lp \bm u^T \dataxi + d \rp  \rp,
 \end{equation}
where $\bm \beta = [ \bm \beta_1, \bm \beta_2, \dots, \bm \beta_m]^T$ are the Lagrangian multipliers. It is clear that the necessary and sufficient condition for $(\bm u^*, d^*)$ to be an optimal solution of (\ref{formulation: SVM-hard margin}) is the existence of a vector of Lagrangian multipliers $\bm \beta^*$ such that the following KKT conditions are satisfied:
 \begin{equation}\label{KKT:SVMhardmargin}
 \left\{
   \begin{aligned}
     & \bm u^* \, = \, \sum_{i=1}^m \bm \beta^*_i \datayi \dataxi, \\
     & \sum_{i=1}^m \bm \beta^*_i \datayi \, = \, 0, \\
     &  \bm \beta^*_i \lp 1-\datayi \lp {\bm u^*}^T \dataxi + d^* \rp  \rp \, = \, 0, \quad i = 1, \dots, m,\\
     &  \bm \beta^*_i \, \geqslant \, 0,\quad  \, i = 1, \dots, m, \\
     & 1-\datayi \lp {\bm u^*}^T \dataxi + d \rp \, \leqslant \, 0, \quad  \, i = 1, \dots, m.
   \end{aligned}
   \right.
 \end{equation}
In the next theorem, we show that $(\bm 0_{n\times n}, \bm u^*, d^*)$ in fact solves (\ref{formulation:QSSVM-vecl1}) when $\lambda$ is large enough.

\begin{theorem}[Equivalence of SVM and L1-QSSVM for  finite $\lambda$] \label{thm: equivalence SVM & QSSVM-vecl1 hardmargin}
Suppose that the training data set $\mathcal D$ defined in (\ref{def:dataset D}) is linearly separable (as defined in (\ref{eq:linear_separable})). Let $(\bm u^*, d^*, \bm \beta^*)$ satisfy the KKT conditions (\ref{KKT:SVMhardmargin}), then $\Bf W^*=\bm 0_{n\times n}$, $\bm b^* = \bm u^*$, $c^* = d^*$, solves (\ref{formulation:QSSVM-vecl1}) when $\lambda$ is large enough.
\end{theorem}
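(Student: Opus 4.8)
The plan is to exploit the equivalence between (\ref{formulation:QSSVM-vecl1}) and its vectorized form (\ref{formulation: L1-QSSVM-vecl1-wbc}), for which the KKT system (\ref{KKT_L1_QSSVM}) is both necessary and sufficient for optimality. The objective is convex, and linear separability (\ref{eq:linear_separable}) implies quadratic separability (take $\Bf W=\bm 0$), which in turn furnishes a strictly feasible point so that Slater's condition holds. Hence it suffices to produce a multiplier vector $\bm\alpha^*$ for which the candidate $(\bm w^*=\bm 0,\ \bm b^*=\bm u^*,\ c^*=d^*)$ satisfies (\ref{KKT_L1_QSSVM}); recall that $\Bf W^*=\bm 0_{n\times n}$ is the same as $\bm w^*=\hv(\Bf W^*)=\bm 0$, so all quadratic terms in $\bm w^*$ will vanish.

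First I would pin down the multiplier by matching the $\bm b$-stationarity line. At $\bm w^*=\bm 0$, $\bm b^*=\bm u^*$ the second line of (\ref{KKT_L1_QSSVM}) reads $2m\bm u^*=\sum_{i=1}^m \bm\alpha_i^*\datayi\dataxi$, while the first line of the SVM conditions (\ref{KKT:SVMhardmargin}) gives $\bm u^*=\sum_{i=1}^m \bm\beta_i^*\datayi\dataxi$; comparing these suggests the scaling $\bm\alpha^*=2m\,\bm\beta^*$. With this choice I would then verify the remaining conditions one line at a time. The $\bm b$-stationarity holds by the computation just indicated; the balance $\sum_{i=1}^m\bm\alpha_i^*\datayi=2m\sum_{i=1}^m\bm\beta_i^*\datayi=0$ and nonnegativity $\bm\alpha^*\geqslant\bm 0$ follow directly from (\ref{KKT:SVMhardmargin}); and, since the quadratic contributions vanish at $\bm w^*=\bm 0$, the complementary slackness $\bm\alpha_i^*\lp 1-\datayi\lp {\bm u^*}^T\dataxi+d^*\rp\rp=0$ and the primal feasibility $1-\datayi\lp {\bm u^*}^T\dataxi+d^*\rp\leqslant 0$ reduce exactly to the corresponding SVM conditions.

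The remaining and main step will be the $\bm w$-stationarity inclusion (the first line of (\ref{KKT_L1_QSSVM})). At $\bm w^*=\bm 0$ the term $2\sum_i{\dataMi}^T\dataMi\bm w^*$ drops out and $\partial(\|\cdot\|_1)|_{\bm 0}=[-1,1]^{\nct}$ is the full box, so the inclusion reduces to requiring the fixed vector
$$\bm v \, := \, 2m\sum_{i=1}^m \bm\beta_i^*\datayi\datasi \, - \, 2\sum_{i=1}^m {\dataMi}^T\bm u^*$$
to lie in $\lambda[-1,1]^{\nct}$, that is, $\|\bm v\|_\infty\leqslant\lambda$. The hard part is recognizing that this is the \emph{only} place the hypothesis enters and that it is automatically satisfiable: $\bm v$ depends solely on the SVM solution $(\bm u^*,\bm\beta^*)$ and the data, but not on $\lambda$, so the box-shaped subdifferential of the $\ell_1$ norm at the origin absorbs $\bm v$ as soon as $\lambda$ is scaled large enough. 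Any $\lambda\geqslant\|\bm v\|_\infty$ works and gives an explicit threshold. Since the candidate is feasible and KKT is sufficient here, this shows $(\bm 0_{n\times n},\bm u^*,d^*)$ solves (\ref{formulation:QSSVM-vecl1}), completing the argument.
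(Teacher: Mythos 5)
Your proposal is correct and follows essentially the same route as the paper's proof: both pass to the vectorized formulation, take the multiplier $\bm\alpha^*=2m\bm\beta^*$, observe that all KKT conditions except the $\bm w$-stationarity inclusion follow directly from (\ref{KKT:SVMhardmargin}) once $\bm w^*=\bm 0$, and reduce that last inclusion to an $\ell_\infty$-norm bound (your $\|\bm v\|_\infty\leqslant\lambda$ is exactly the paper's condition (\ref{lower_lambda}) after dividing by $2$), which holds for $\lambda$ large enough since the subdifferential of $\|\cdot\|_1$ at the origin is the full box.
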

\begin{proof}
For convenience, we consider the equivalent formulation (\ref{formulation: L1-QSSVM-vecl1-wbc}).
  The Lagrangian of (\ref{formulation: L1-QSSVM-vecl1-wbc}) is the following:
  \begin{equation}\label{L_Q}
   \begin{aligned}
    \mathcal L_Q \lp \bm w, \bm b, c, \bm \alpha \rp = &  \sum_{i=1}^m \bm w^T {\dataMi}^T \dataMi \bm w + 2 \sum_{i=1}^m \bm w^T {\dataMi}^T \bm b + m \bm b^T \bm b + \lambda \|\bm w\|_1 + \\
    & \sum_{i=1}^m \bm \alpha_i \lp 1- \datayi\lp \bm w^T \datasi + \bm b^T \dataxi + c \rp \rp.
   \end{aligned}
  \end{equation}
The optimality KKT conditions for this problem based on  (\ref{eq:nonsmooth_opt_KKT}) are as follows:
 \begin{equation}\label{L_Q KKT}
   \left\{
     \begin{aligned}
      &\bm  0 \in  2 \sum_{i=1}^m {\dataMi}^T \dataMi \bm w^* + 2 \sum_{i=1}^m {\dataMi}^T \bm b^*  + \lambda  \partial\left.( \|\cdot\|_1)\right|_{\bm w^*} - \sum_{i=1}^m \bm \alpha_i^* \datayi \datasi, \\
      & 2\sum_{i=1}^m \dataMi \bm w^* + 2m \bm b^* = \sum_{i=1}^{m} \bm \alpha_i^* \datayi \dataxi, \\
      & \sum_{i=1}^{m}\bm \alpha_i^* \datayi = 0, \\
      & \bm \alpha^*_i \lp 1-\datayi \lp {\bm w^*}^T \datasi + {\bm b^*}^T \dataxi + c^* \rp \rp = 0, \quad  i = 1, \dots, m, \\
      & \bm \alpha^*_i \geqslant 0, \quad i = 1, \dots, m, \\
      & 1-\datayi \lp {\bm w^*}^T \datasi + {\bm b^*}^T \dataxi + c^* \rp \leqslant 0,
      \quad  i = 1, \dots, m.
     \end{aligned}
   \right.
   \end{equation}
From (\ref{KKT:SVMhardmargin}), one can see that $
     \begin{bmatrix}
     {\bm w^*}^T & {\bm b^*}^T & {c^*} & {\bm \alpha^*}^T
   \end{bmatrix} = \begin{bmatrix} \bm 0^T & {\bm u^*}^T & {d^*} & 2m {\bm \beta^*}^T
   \end{bmatrix}$
satisfies  all the equations in (\ref{L_Q KKT}) except
 $\bm 0 \in \frac{\partial \mathcal L_Q}{\partial \bm w}\biggr\rvert_{\bm 0}$. However, this condition is equivalent to \begin{equation}\label{lower_lambda}
     \left\|\sum_{i=1}^{m}{\dataMi}^T \bm u^* - m \sum_{i=1}^{m}\bm \beta^*_i \datayi \datasi\right\|_\infty \le \frac{\lambda}{2}.
 \end{equation}
So, if we choose $\lambda$ large enough, this condition also holds.
Hence, $\begin{bmatrix} \bm 0^T & {\bm u^*}^T & {d^*} & 2m {\bm \beta^*}^T
   \end{bmatrix}$ satisfies the KKT conditions when $\lambda$ is large enough. Since the KKT conditions are also sufficient for the solution of a convex program, we conclude that $\Bf W^*=\bm 0_{n\times n}$, $\bm b^* = \bm u^*$, $c^* = d^*$, solves (\ref{formulation:QSSVM-vecl1}).
\end{proof}

Next, we find a lower  bound for such $\lambda$. From the inequality (\ref{lower_lambda}), it is enough to find an upper bound for $$\left\|\sum_{i=1}^{m}{\dataMi}^T \bm u^* - m \sum_{i=1}^{m}\bm \beta^*_i \datayi \datasi\right\|_\infty,$$ where $\bm u^*$ and $ \bm \beta^*$ satisfy (\ref{KKT:SVMhardmargin}). Let $$\bm a = \sum_{i=1}^{m}{\dataMi}^T \bm u^* - m \sum_{i=1}^{m}\bm \beta^*_i \datayi \datasi \in \mathbb{R}^{n(n+1)/2}.$$
By denoting  $\Bf T:= \Bf X^T \text{Diag}(\bm y)$, we have $\bm u^* = \displaystyle{\sum_{i=1}^m }\bm \beta^*_i \datayi \dataxi = \Bf T \bm \beta^*$ so that $$\|\bm u^*\|_2\leqslant \|\Bf T\|_2\|\bm \beta^*\|_2=\|\Bf X^T\|_2\|\bm \beta^*\|_2=\|\Bf X\|_2\|\bm \beta^*\|_2\le \|\Bf X\|_2 \|\bm \beta^*\|_1.$$ Thus, along with definitions of  (\ref{def: datasi and datari}) and (\ref{def:M^i}), for $ k \in  \{1,2,\dots, n(n+1)/2\}$,  we have:
\begin{align*}
|\bm a_k| \ \leqslant &  \ \sum_{i=1}^{m} \|{\dataMi_{\bullet k}}\|_2 \|\bm u^*\|_2  \ + \  m \sum_{i=1}^{m}\bm \beta^*_i |\bm s^{(i)}_k|  \\
\leqslant & \ \sum_{i=1}^{m} \|\dataxi\|_2 \|\bm u^*\|_2  \ + \  m \sum_{i=1}^{m}\bm \beta^*_i  \|\dataxi\|_\infty^2 \\
\leqslant & \ m \|\bm u^*\|_2 \ \max_i \|\dataxi\|_2   \ + \ m  \|\bm \beta ^*\|_1 \  \max_i \|\dataxi\|_\infty^2 \\
\leqslant & \ m \|\Bf X\|_2 \|\bm \beta^*\|_1 \ \max_i \|\dataxi\|_2  \ + \  m  \|\bm \beta ^*\|_1\ \max_i \ \|\dataxi\|_\infty^2. \\
= & \ m\|\bm \beta^*\|_1\bigg( \|\Bf X\|_2 \ \max_i \|\dataxi\|_2  \ + \  \max_i \ \|\dataxi\|_\infty^2\bigg).
\end{align*}
It suffices to find an upper bound for $\|\bm \beta^*\|_1$. Note that (\ref{eq:linear_separable}) implies that (\ref{formulation: SVM-hard margin}) has strictly feasible solutions, i.e., there exists $( \overline{\bm u}, \overline d)$ such that
$$\datayi\lp \overline{\bm u}^T \dataxi + \overline d \rp \, > \, 1, \quad i = 1, \dots, m.$$
Therefore, by Exercise 5.3.1 in \cite{bertsekas1997nonlinear}, we know that the set of Lagrangian multipliers $\bm \beta^*$ is bounded from above. In particular, we have
\begin{equation*}
\|\bm \beta^*\|_1 \, \leqslant \, \frac{\|\overline {\bm u}\|_2^2-\|\bm u^*\|_2^2}{2c_2}
\leqslant \frac{\|\overline {\bm u}\|_2^2}{2c_2},
\end{equation*}
where  $c_2 = \displaystyle{\min_{i = 1, \dots, m}}  \bigg[\datayi \lp \overline{\bm u}^T \dataxi + \overline d\rp-1\bigg]$. Thus, to have the above result,  it suffices to have
\[
\lambda \geqslant m\frac{\|\overline {\bm u}\|_2^2}{2c_2}\bigg( \|\Bf X\|_2 \ \max_i \|\dataxi\|_2  \ + \  \max_i \|\dataxi\|_\infty^2\bigg).
\]

We can show a similar result for the soft margin version. We consider the soft margin version of SVM is as follows.
\begin{equation} \tag{SSVM}\label{formulation: SVM-soft margin}
 \begin{aligned}
  \min_{\bm u, d, \bm \xi} \quad & \frac{1}{2} \|\bm u\|^2_2 +\mu \sum_{i=1}^m \bm \xi_i\\
  s.t. \quad & \datayi \lp \bm u^T \dataxi + d \rp \geqslant 1 -  \bm \xi_i, \quad i = 1, \dots, m, \\
  & \bm u \in \R^n, \, d \in \R, \, \bm \xi\in \R^n_+.
\end{aligned}
\end{equation}
Since (\ref{formulation: SVM-soft margin}) is always feasible and the objective value is bounded from below, there exists an optimal solution $(\bm u^*, d^*, \bm \xi^*)$. We can show the following result.
\begin{corollary} \label{thm: equivalence SVM & QSSVM-vecl1 softmargin}
Let $(\bm u^*, d^*, \bm \xi^*)$ be an optimal solution of (\ref{formulation: SVM-soft margin}). It follows that $$(\bm 0_{n\times n},\bm u^*, d^*,\bm \xi^*)$$ solves (\ref{formulation:SQSSVM-vecL1}) when $\lambda$ is large enough.
\end{corollary}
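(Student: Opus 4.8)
The plan is to mirror the KKT-matching argument used for the hard-margin case in Theorem~\ref{thm: equivalence SVM & QSSVM-vecl1 hardmargin}, now carrying the extra soft-margin multipliers. First I would pass to the equivalent formulation (\ref{formulation: L1-SQSSVM-vecl1-wbc}) and write down its optimality system, which is exactly (\ref{KKT_L1_SQSSVM}): besides the multipliers $\bm\alpha^*$ attached to the margin constraints, the nonnegativity $\bm\xi\geqslant\bm 0$ contributes a second family $\bm\eta^*$ obeying $\mu=\bm\alpha_i^*+\bm\eta_i^*$ and $\bm\xi_i^*\bm\eta_i^*=0$. In parallel, I would record the KKT system of (\ref{formulation: SVM-soft margin}): with margin multipliers $\bm\beta^*$ and box multipliers $\bm\gamma^*$ one has $\bm u^*=\sum_i\bm\beta_i^*\datayi\dataxi$, $\sum_i\bm\beta_i^*\datayi=0$, the complementarity $\bm\beta_i^*\lp 1-\bm\xi_i^*-\datayi(\bm u^{*T}\dataxi+d^*)\rp=0$, together with $\mu=\bm\beta_i^*+\bm\gamma_i^*$ and $\bm\gamma_i^*\bm\xi_i^*=0$.

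Then I would propose the candidate $\bm w^*=\bm 0$, $\bm b^*=\bm u^*$, $c^*=d^*$, and $\bm\xi^*$ as given, with trial multipliers $\bm\alpha^*=2m\bm\beta^*$ and $\bm\eta^*=2m\bm\gamma^*$. Since $\bm w^*=\bm 0$ gives $\Bf W\dataxi+\bm b=\bm b$, the quadratic term of (\ref{formulation: L1-SQSSVM-vecl1-wbc}) collapses to $m\,\bm b^T\bm b$ and the $\bm b$-stationarity reads $2m\bm b^*=\sum_i\bm\alpha_i^*\datayi\dataxi$; substituting $\bm b^*=\bm u^*$ and $\bm\alpha^*=2m\bm\beta^*$ reduces it to the first SVM equation $\bm u^*=\sum_i\bm\beta_i^*\datayi\dataxi$. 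The $c$-stationarity $\sum_i\bm\alpha_i^*\datayi=0$ and the margin complementarity are then inherited directly, because $\bm w^*=\bm 0$ leaves each constraint body $\datayi(\bm b^{*T}\dataxi+c^*)=\datayi(\bm u^{*T}\dataxi+d^*)$ unchanged, so feasibility and the active set are exactly those of the SVM solution.

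The main obstacle is the reconciliation of the soft-margin relations, and it is genuinely new relative to Theorem~\ref{thm: equivalence SVM & QSSVM-vecl1 hardmargin}, where no $\bm\xi$ constraints appear and a common positive prefactor on the quadratic term is irrelevant to the minimizer. Here the $\bm b$-stationarity forces the factor $2m$ onto $\bm\alpha^*$ (hence onto $\bm\eta^*=2m\bm\gamma^*$), and this factor is precisely the mismatch between $\sum_i\|\bm b\|_2^2=m\|\bm b\|_2^2$ in the $\bm w=\bm 0$ slice and the $\tfrac12\|\bm u\|_2^2$ carried by (\ref{formulation: SVM-soft margin}). Consequently $\bm\alpha^*+\bm\eta^*=2m(\bm\beta^*+\bm\gamma^*)$, so the stationarity $\mu=\bm\alpha_i^*+\bm\eta_i^*$ together with $\bm\eta^*\geqslant\bm 0$ and $\bm\xi_i^*\bm\eta_i^*=\nobreak 2m\,\bm\xi_i^*\bm\gamma_i^*=0$ holds exactly once the penalty of (\ref{formulation: L1-SQSSVM-vecl1-wbc}) is taken as $2m$ times that of (\ref{formulation: SVM-soft margin}). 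I would therefore make this penalty correspondence explicit (equivalently, solve the SVM with the rescaled penalty), which is what makes the construction of $\bm\eta^*$ simultaneously nonnegative and complementary to $\bm\xi^*$.

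Once the multipliers are fixed, only the $\bm w$-subdifferential inclusion remains, and it is handled exactly as in the derivation of (\ref{lower_lambda}). Using $\partial(\|\cdot\|_1)|_{\bm 0}=[-1,1]^{n(n+1)/2}$, the inclusion $\bm 0\in 2\sum_i(\dataMi)^T\bm b^*+\lambda\,\partial(\|\cdot\|_1)|_{\bm 0}-\sum_i\bm\alpha_i^*\datayi\datasi$ is solvable for a subgradient in the box if and only if $\lambda\geqslant\big\|\sum_i\bm\alpha_i^*\datayi\datasi-2\sum_i(\dataMi)^T\bm b^*\big\|_\infty$, so any large enough $\lambda$ completes the system; convexity of (\ref{formulation: L1-SQSSVM-vecl1-wbc}) makes the KKT conditions sufficient, and translating back to $(\Bf W,\bm b,c,\bm\xi)$ yields the claim. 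Finally, a finite explicit threshold for $\lambda$ follows from the boundedness-of-multipliers estimate used after Theorem~\ref{thm: equivalence SVM & QSSVM-vecl1 hardmargin}, now applied to the bounded set of $\bm\beta^*$ guaranteed by strict feasibility of (\ref{formulation: SVM-soft margin}).
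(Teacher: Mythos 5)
Your proposal follows exactly the route the paper intends for this corollary (which the paper states without proof, only remarking that "we can show" it): lift the solution of (\ref{formulation: SVM-soft margin}) to $\bm w^*=\bm 0$, match the KKT system (\ref{KKT_L1_SQSSVM}) of (\ref{formulation: L1-SQSSVM-vecl1-wbc}) with multipliers rescaled by $2m$, and close the $\bm w$-subdifferential inclusion by taking $\lambda$ beyond an explicit finite threshold, exactly as in Theorem \ref{thm: equivalence SVM & QSSVM-vecl1 hardmargin}. Your verification of the $\bm b$- and $c$-stationarity, the two complementarity families, and the box condition on $\lambda$ is correct.

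The $2m$ mismatch you isolated deserves emphasis: it is not an artifact of your particular choice of multipliers but a genuine defect of the corollary as stated, so your insistence on making the penalty correspondence explicit is what makes the proof valid. In the hard-margin case the factor $2m$ is harmless because multiplying the objective by a positive constant does not move the minimizer; in the soft-margin case it changes the relative weight of the quadratic term against the slack penalty, and with a common $\mu$ in both models the statement is false. Concretely, take $n=1$, $m=2$, $\bm x^{(1)}=1$, $y^{(1)}=1$, $\bm x^{(2)}=-1$, $y^{(2)}=-1$, and $\mu=\tfrac12$: the unique solution of (\ref{formulation: SVM-soft margin}) is $(u^*,d^*,\bm\xi^*)=(1,0,\bm 0)$, but in (\ref{formulation: L1-SQSSVM-vecl1-wbc}) the feasible point $w=0$, $b=\tfrac14$, $c=0$, $\xi_1=\xi_2=\tfrac34$ has objective value $mb^2+\mu(\xi_1+\xi_2)=\tfrac18+\tfrac34=\tfrac78$, whereas the lifted point $(0,1,0,\bm 0)$ has objective value $2$; both have vanishing $\ell_1$ term, so the lifted point is not optimal for any $\lambda$. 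At the KKT level this is precisely your observation: $\bm b$-stationarity forces $\bm\alpha_1^*+\bm\alpha_2^*=2mb^*=4$, while $\mu=\bm\alpha_i^*+\bm\eta_i^*$ with $\bm\eta^*\geqslant\bm 0$ forces $\bm\alpha_i^*\leqslant\tfrac12$, so no admissible multipliers exist at all. With the correspondence you propose (penalty $2m\mu$ in (\ref{formulation:SQSSVM-vecL1}), equivalently solving (\ref{formulation: SVM-soft margin}) with penalty $\mu/(2m)$), your argument is complete and correct; the corollary should really be stated in that corrected form.
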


Next, we consider the case when the data is quadratically separable with a sparse matrix $\Bf W$. In particular, we assume that the data set is separable by quadratic function $f(\bm x) = \frac{1}{2}\bm x^T {\Bf W} \bm x + \bm b^T \bm x + c$, where $\bm w = \hv({\Bf W})$ contains mostly $0$'s. Let $\mathcal Z$ be the set of indices of $0$'s in $\bm w$, i.e.,
$w_j = 0$ for all $j\in \mathcal Z$. In this case, the following restricted (\ref{formulation:QSSVM}) model is feasible and has an optimal solution with a finite objective value:
  \begin{equation} \tag{R-QSSVM\textprime\textprime}\label{formulation: R-QSSVM-vecl1-wbc}
       \begin{aligned}
          \min_{\bm w, \bm b, c} \quad &  q(\bm w, \bm b, c) \, \triangleq \,  \sum_{i=1}^m {\bm w}^T {\dataMi}^T \dataMi {\bm w} + 2 \sum_{i=1}^m {\bm w}^T {\dataMi}^T  {\bm b}  + m \bm b^T \bm b \\
          s.t. \quad & \datayi\lp \bm w^T \datasi + \bm b^T \dataxi + c \rp \, \geqslant \, 1, \quad i = 1, \dots, m, \\
          &  \bm w_j \, = \, 0\, , \quad \forall j\in \mathcal Z,\\
           & \bm w \in \R^{\frac{n(n+1)}{2}}, \, \bm b \in \R^n, \, c \in \R.
       \end{aligned}
\end{equation}
Let $(\bm w^*, \bm b^*, c^*)$ be an optimal solution of (\ref{formulation: R-QSSVM-vecl1-wbc}), there exists multipliers $\bm \alpha^* \in \R^m, $ and $\bm \beta_{\mathcal Z}^* \in \R^{|\mathcal Z|}$, such that the KKT conditions are satisfied. Expanding $\bm \beta_{\mathcal Z}^*$ to $\bm \beta^* \in \R^{\nct}$ by filling 0's at the indices not in $\mathcal Z$, the KKT conditions can be written below:
 \begin{equation*}
   \left\{
     \begin{aligned}
      &2 \sum_{i=1}^m {\dataMi}^T \dataMi \bm w^* + 2 \sum_{i=1}^m {\dataMi}^T \bm b^*  - \sum_{i=1}^m  \bm \alpha_i^* \datayi \datasi + \bm \beta^* = \bm 0,\\
      & 2\sum_{i=1}^m \dataMi \bm w^* + 2m \bm b^* = \sum_{i=1}^{m} \bm \alpha_i^* \datayi \dataxi,\\
      & \sum_{i=1}^{m} \bm \alpha_i^* \datayi = 0, \\
      & \bm \alpha^*_i \lp 1-\datayi \lp {\bm w^*}^T \datasi + {\bm b^*}^T \dataxi + c^* \rp \rp = 0, \quad  i = 1, \dots, m, \\
      & \bm \alpha^*\geqslant \bm 0, \\
      & 1-\datayi \lp {\bm w^*}^T \datasi + {\bm b^*}^T \dataxi + c^* \rp \leqslant 0,
      \quad  i = 1, \dots, m, \\
      &  \bm w_j = 0, \quad \forall j \in \mathcal Z.
     \end{aligned}
   \right.
   \end{equation*}
   Similar to Theorem \ref{thm: equivalence SVM & QSSVM-vecl1 hardmargin}, we can see that when $\lambda > \|\bm \beta^*\|_\infty$,
   $(\bm w^*, \bm b^*, c^*, \bm \alpha^*)$ satisfies the KKT conditions (\ref{L_Q KKT}), and hence $(\bm w^*, \bm b^*, c^*)$ is an optimal solution of (\ref{formulation:QSSVM-vecl1}). This indicates, that when $\lambda$ is large enough, the $\ell_1$ norm regularization can accurately capture the sparsity in matrix $\Bf W$.

   The above argument can also be applied for the model \ref{formulation:SQSSVM-vecL1} over any given data set. However, we can have a stronger result in quadratically separable case. In fact, by combining this result with the ones  obtained  from Theorems \ref{theorem:z_uniqueness}, \ref{th:surely_pd_G} and \ref{theorem:xi_0_mu_large}, when the separating quadratic surface is generated by a sparse matrix $\Bf W$, we have the below corollary.

   \begin{corollary}
   For almost any quadratically separable (\ref{eq:quad_separable}) data set $\mathcal D$  (\ref{def:dataset D}) for  which the generating matrix $\Bf W^*$ is sparse, the proposed model \ref{formulation:SQSSVM-vecL1} obtains a unique solution that captures this sparse matrix $\Bf W^*$ and also $\bm \xi^*=\bm 0$ provided that penalty parameters $\lambda$ and $\mu$ are large enough.
   \end{corollary}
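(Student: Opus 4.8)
The plan is to assemble the corollary from the four ingredients already established, being careful about the order in which the two penalty parameters are fixed. First I would dispose of the phrase ``almost any quadratically separable data set.'' Assuming $m\ge n+1$, Theorem~\ref{th:surely_pd_G} tells us that the set of sample matrices $\Bf X$ for which $\Bf G$ fails to be positive definite is Lebesgue-null in $\R^{m\times n}$. Hence, outside a measure-zero exceptional set, we may assume $\Bf G\succ 0$, which is exactly the hypothesis required by Theorems~\ref{theorem:z_uniqueness} and~\ref{theorem:xi_0_mu_large}. From here on I fix a data set $\mathcal D$ that is quadratically separable by a surface with sparse $\Bf W^*$ (zeros indexed by $\mathcal Z$) and for which $\Bf G$ is positive definite.

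Second, I would fix $\lambda$ to guarantee sparsity recovery. The discussion preceding the corollary shows that the restricted model (\ref{formulation: R-QSSVM-vecl1-wbc}) is feasible and admits an optimal solution $(\bm w^*, \bm b^*, c^*)$ carrying KKT multipliers $\bm\alpha^*$ and (after padding with zeros) $\bm\beta^*$; and that whenever $\lambda>\|\bm\beta^*\|_\infty$ this triple satisfies the KKT system (\ref{L_Q KKT}) and is therefore optimal for (\ref{formulation:QSSVM-vecl1}). The same reasoning transfers to the soft-margin model, so for $\lambda$ exceeding this threshold the recovered $\bm w^*$ has $\bm w^*_j=0$ for all $j\in\mathcal Z$, i.e. the true sparsity pattern is captured. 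I fix such a $\lambda$ now, because the margin-vanishing threshold produced next depends on it.

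Third, with $\lambda$ frozen, I would invoke Theorem~\ref{theorem:xi_0_mu_large}: since $\mathcal D$ is quadratically separable and $\Bf G\succ 0$, there is a threshold $\underline\mu$ (depending on the fixed $\lambda$) so that for every $\mu>\underline\mu$ the model (\ref{formulation:SQSSVM-vecL1}) has a unique optimal solution with $\bm\xi^*=\bm 0_m$. The construction in that theorem's proof lifts an optimum $(\bm w^*, \bm b^*, c^*)$ of the hard-margin model (\ref{formulation: L1-QSSVM-vecl1-wbc}) to the optimum $(\bm w^*, \bm b^*, c^*, \bm 0_m)$ of the soft-margin model; feeding it the sparse optimum fixed in the second step shows that the unique soft-margin solution inherits the correct zero pattern. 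Uniqueness of $(\bm w^*, \bm b^*)$ follows from Theorem~\ref{theorem:z_uniqueness} via $\Bf G\succ 0$, $\bm\xi^*=\bm 0_m$ is forced as the only minimizer of the associated linear program (\ref{eq:z-xi-LP}), and $c^*$ is pinned down as in Theorem~\ref{theorem:xi_0_mu_large}. Combining the three steps yields the claim.

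I expect the only delicate point to be the interplay of the two thresholds: the $\mu$-bound in Theorem~\ref{theorem:xi_0_mu_large} is a function of $\lambda$ through the quantity $\pi$, so the argument must fix $\lambda$ large enough for sparsity recovery first and only then choose $\mu$, rather than treating the two ``large enough'' requirements independently. The remaining work---checking that the sparse hard-margin optimum is the particular optimum that Theorem~\ref{theorem:xi_0_mu_large} lifts, and that uniqueness then forces the soft-margin solution to agree with it---is routine given the earlier results.
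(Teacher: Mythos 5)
Your proposal is correct and follows essentially the same route as the paper, which likewise obtains the corollary by combining the restricted-model KKT argument for sparsity recovery (large $\lambda$) with Theorems~\ref{theorem:z_uniqueness}, \ref{th:surely_pd_G} and \ref{theorem:xi_0_mu_large}. Your explicit care about the ordering of the thresholds---fixing $\lambda$ first and then choosing $\mu > \underline{\mu}(\lambda)$---is a detail the paper leaves implicit (it is built into the statement of Theorem~\ref{theorem:xi_0_mu_large}), so your write-up is, if anything, slightly more complete than the paper's own.
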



\section{Numerical Experiments on Performance of \ref{formulation:QSSVM-vecl1} and \ref{formulation:SQSSVM-vecL1}} \label{sec:Numerical Experiments}

In this section, we conduct various numerical experiments  to analyze the behavior of \ref{formulation:QSSVM-vecl1} and \ref{formulation:SQSSVM-vecL1} over different data sets and demonstrate their effectiveness. All the experiments are conducted on a desktop computer employing all eight threads of Intel\textregistered\  Core\texttrademark\  i7-2600 CPU @ 3.40GHz and 8GB RAM. We use Gurobi 7.0.2 to solve the quadratic programs in all the QSSVM models.

Figures (\ref{fig:linear separable case 1})-(\ref{fig:quad separable case}) depict  the flexibility of \ref{formulation:QSSVM-vecl1} in capturing linear and quadratic separating surfaces. When data set is linearly separable (\ref{fig:linear separable case 1} and \ref{fig:linear separable case 2}), \ref{formulation:QSSVM-vecl1} yields a hyperplane for  $\lambda=10000$ but when it is quadratically separable (\ref{fig:quad separable case}), \ref{formulation:QSSVM-vecl1} with $\lambda=1$ behaves exactly the same as \ref{formulation:QSSVM}. This figure also confirms that SVM does not perform well when the data set is quadratically separable unlike its success in linearly separable data sets.

\begin{figure}[H]
    \centering
    \begin{subfigure}[t]{0.33\textwidth}
        \includegraphics[width=\textwidth]{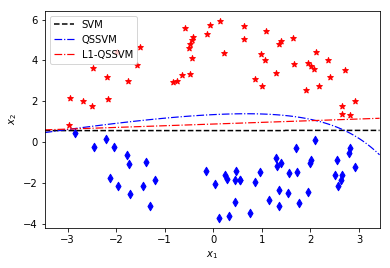}
        \caption{Linearly separable case 1. }
        \label{fig:linear separable case 1}
    \end{subfigure}
    ~
    \begin{subfigure}[t]{0.33\textwidth}
        \includegraphics[width=\textwidth]{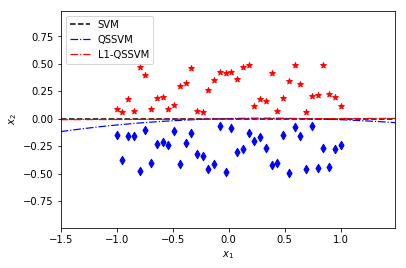}
        \caption{Linearly separable case 2. }
        \label{fig:linear separable case 2}
    \end{subfigure}
    ~
    \begin{subfigure}[t]{0.33\textwidth}
        \includegraphics[width=\textwidth]{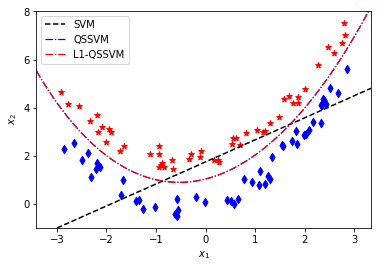}
        \caption{Quadratically separable case.}
        \label{fig:quad separable case}
    \end{subfigure}

    \caption{\ref{formulation:QSSVM-vecl1}  performance on linearly and quadratically separable data sets.}
    \label{fig:linear-quad compare}

\end{figure}

Figure \ref{fig:lambda Varies}  verifies Corollary \ref{thm: equivalence SVM & QSSVM-vecl1 softmargin} on  \ref{formulation:SQSSVM-vecL1}  in  visual details. Given a linearly separable data set and fix $\mu$, we plot the separating surfaces obtained from \ref{formulation:SQSSVM-vecL1} for  different values of $\lambda$ along with  the separating surfaces obtained from  \ref{formulation: SVM-hard margin} and \ref{formulation:SQSSVM}.  We can see that when $\lambda$ is small, the solution of L1-SQSSVM is close to that of \ref{formulation:SQSSVM} and when $\lambda$ is large, the solution of \ref{formulation:SQSSVM-vecL1} is close to that of SVM. In other words, as $\lambda$ gets bigger, the solution of \ref{formulation:SQSSVM-vecL1}  becomes flatter. Roughly speaking, the curvature approaches zero.

    \begin{table}[H]
	\centering
	\resizebox{\textwidth}{!}{
	\begin{tabular}{cccccc}
	\hline
	Data set & Artificial I & Artificial II & Artificial III & Artificial IV & Artificial 3-D    \\ \hline
	$n$ & 3 & 3 & 5 & 10 &  3 \\ \hline
	Sample size ($N_1$/$N_2$) & 67/58 & 79/71 & 106/81 & 204/171 & 99/101 \\ \hline
	\end{tabular}
	}
	\caption{Basic information of artificial data sets.}
	\label{table: 2-class artificial data sets info}
	\end{table}

We use Figure \ref{fig:sumxii goes to 0} to numerically verify Theorem \ref{theorem:xi_0_mu_large}. The data set utilized in this experiment is the Artificial 3-D data, which is quadratically separable. The basic information on the Artificial 3-D data utilized in this experiment is listed in Table \ref{table: 2-class artificial data sets info}. As shown in both pictures, the optimal solution of \ref{formulation:SQSSVM-vecL1} approaches  to that of \ref{formulation:QSSVM-vecl1}
as $\mu$ becomes large enough.

\begin{figure}[H]
  \centering
    \includegraphics[width=0.8\textwidth]{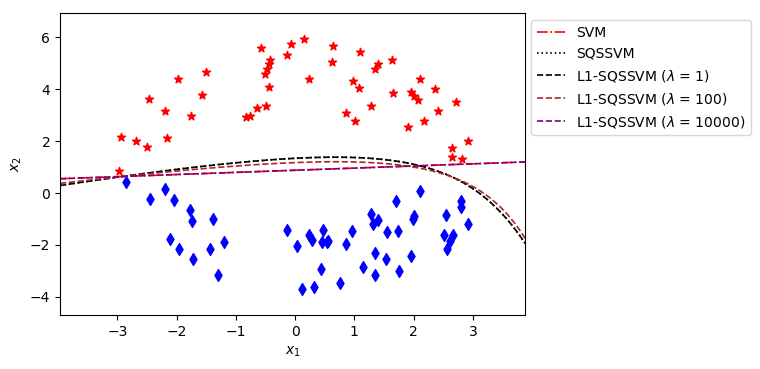}
  \caption{Influence of the parameter $\lambda$ on the curvature of the optimal solution of \ref{formulation:SQSSVM-vecL1}.}
  \label{fig:lambda Varies}
\end{figure}

\begin{figure}[H]
    \centering
   \begin{subfigure}[t]{0.40\textwidth}
    \includegraphics[width=\textwidth]{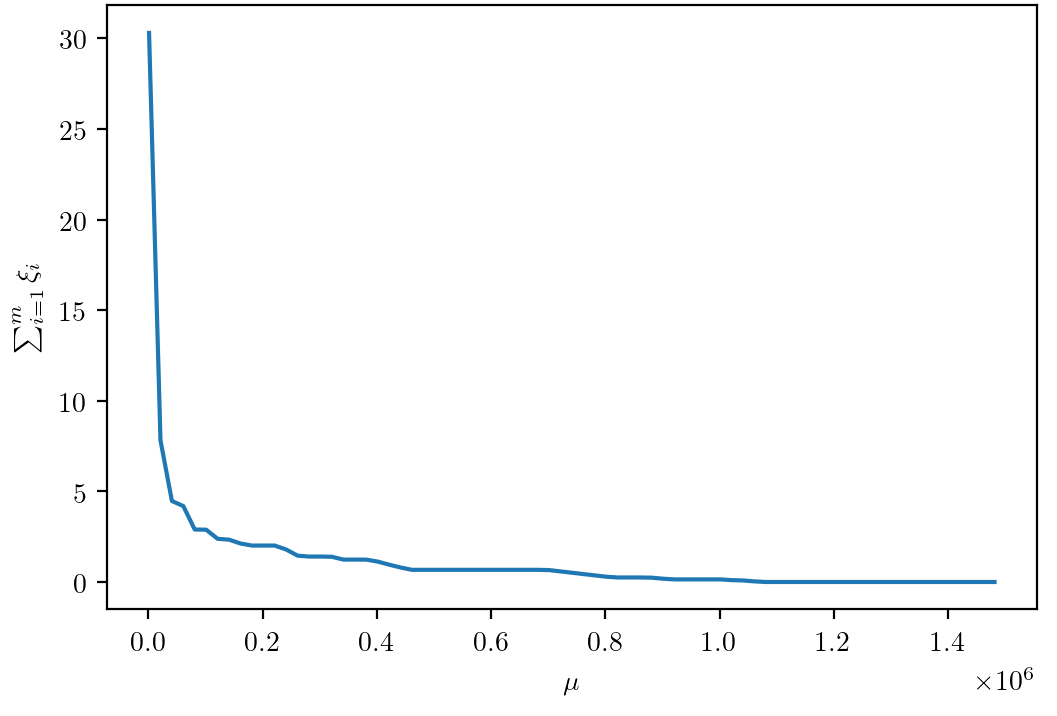}
    \caption{${\sum_{i=1}^m }\bm \xi_i$ against parameter $\mu$. (Artificial 3-D data)}
    \label{fig:sumxii goes to 0}
  \end{subfigure}
    ~
    \begin{subfigure}[t]{0.40\textwidth}
        \includegraphics[width=\textwidth]{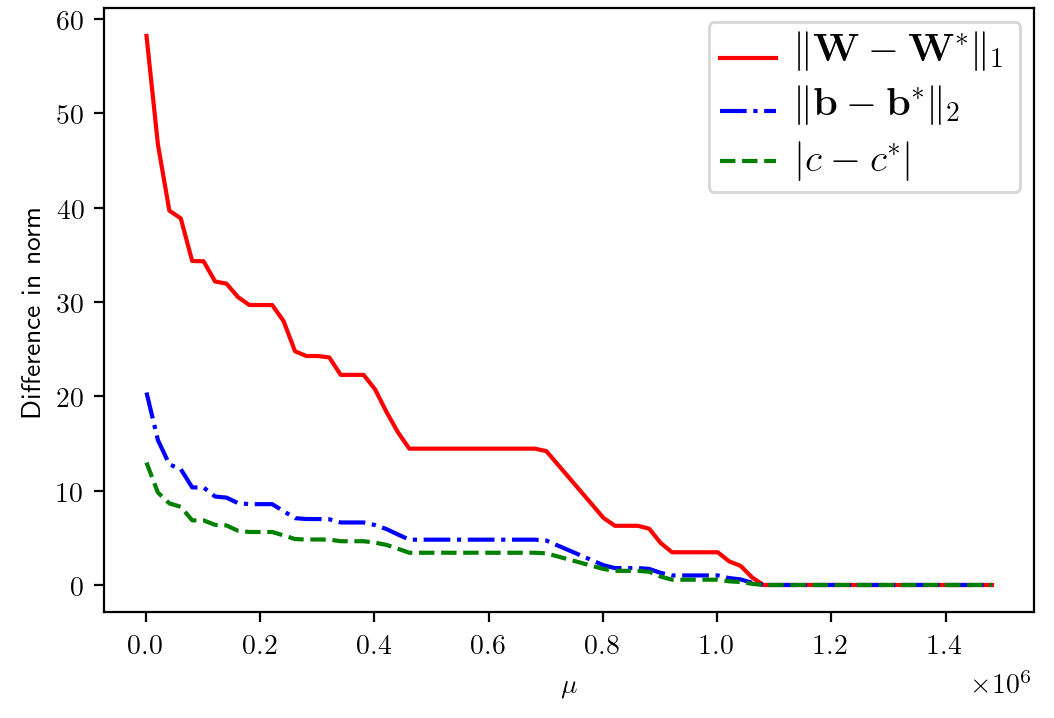}
        \caption{Relative error  against parameter $\mu$. (Artificial 3-D data)}
        \label{fig:norm diff goes to 0}
    \end{subfigure}

    \caption{Influence of the parameter $\mu$ on the behavior of the optimal solution of \ref{formulation:SQSSVM-vecL1}.}
    \label{fig:sumxii goes to 0 and coeff norm diff goes to 0}

\end{figure}

Next, we numerically demonstrate that the $\ell_1$ norm term in our proposed models is significant in classification, namely, a suitable parameter $\lambda>0$  exists  that  leads to  a better performance than when $\lambda=0$. We only focus on the soft margin  model because both models resemble similar behavior in this sense. To show that the existence of such optimum parameter $\lambda$ is independent of the choice of parameter $\mu$ and the data set, we have six different data sets in which the parameter $\mu$ changes from small to large. To tune a suitable approximation of optimum parameter $\mu$, we use \ref{formulation:SQSSVM} for a given data set and choose the $\hat \mu$ with the highest accuracy score. Note that our model  has two parameters (one more degree of freedom than that of \ref{formulation:SQSSVM}) so that it naturally improves the accuracy of classification compared with \ref{formulation:SQSSVM} with $\hat\mu$.
The considered discrete range for $\mu$ to obtain $\hat \mu$ is such that $\log_2\mu\in \{-3. -2, \dots, 20\}$. If distinct values of $\hat \mu$ exist, we simply set up $\hat \mu$ as their mean. Figure \ref{fig:accuracy score vs lambda} demonstrates that for different scales of $\hat \mu$, our proposed model \ref{formulation:SQSSVM-vecL1} for some $\lambda>0$ leads to a better performance than that of its parent model \ref{formulation:SQSSVM} in which $\lambda=0$ on artificial and real-world data sets.

\begin{figure}[H]
    \centering
    \begin{subfigure}[b]{0.3\textwidth}
        \includegraphics[width=\textwidth]{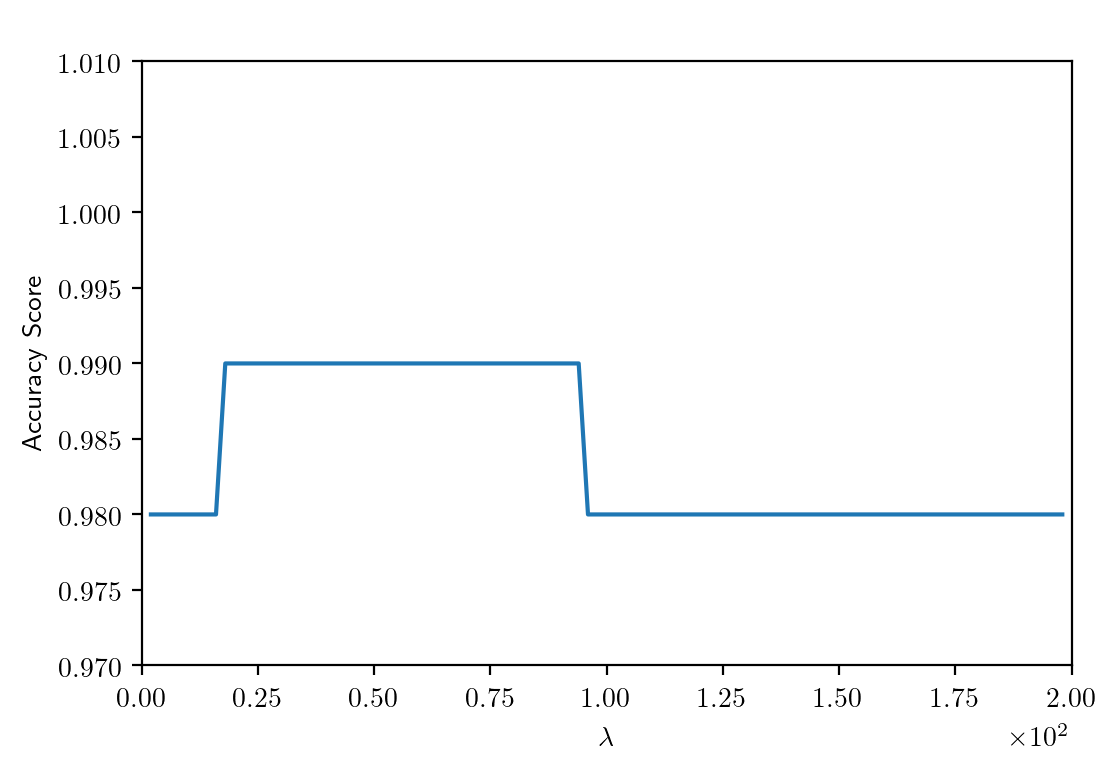}
        \caption{$\hat\mu = 128$\\ Iris data}
        \label{fig:iris-lambda-vs-score}
    \end{subfigure}
    ~
    \begin{subfigure}[b]{0.3\textwidth}
        \includegraphics[width=\textwidth]{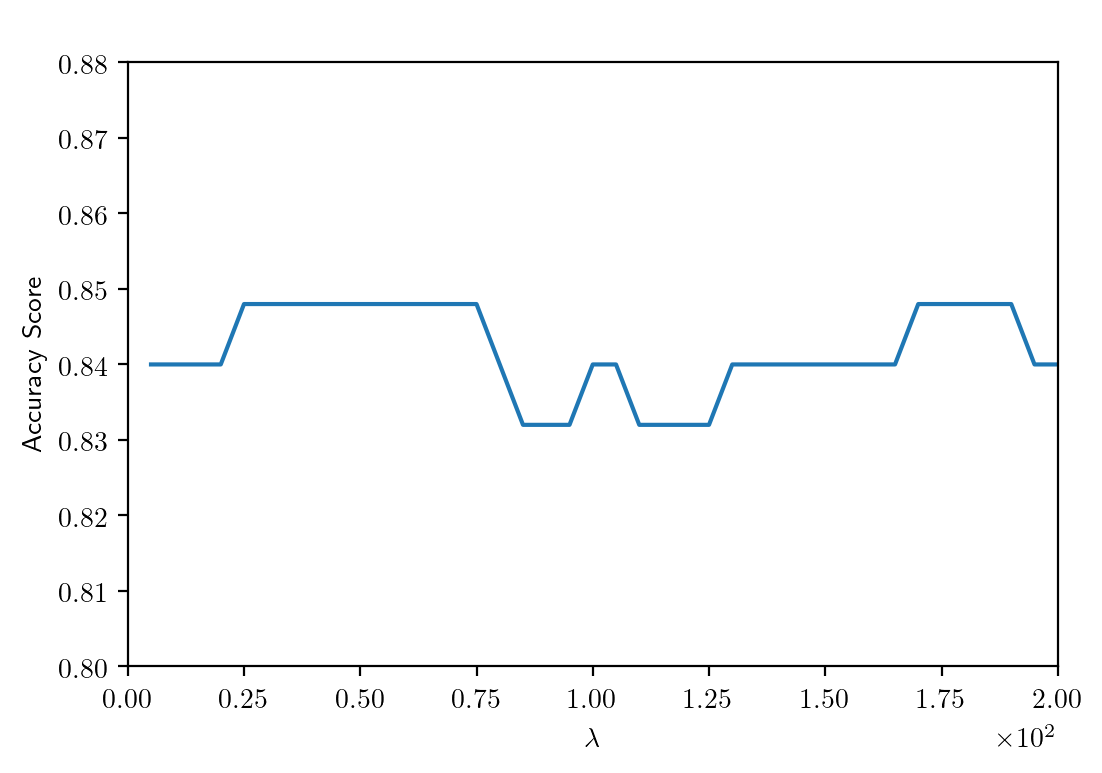}
        \caption{$\hat\mu = 203$\\ Artificial data \RNum{1}  }
        \label{fig:arti-SM1-lambda-vs-score}
    \end{subfigure}
    ~
    \begin{subfigure}[b]{0.3\textwidth}
        \includegraphics[width=\textwidth]{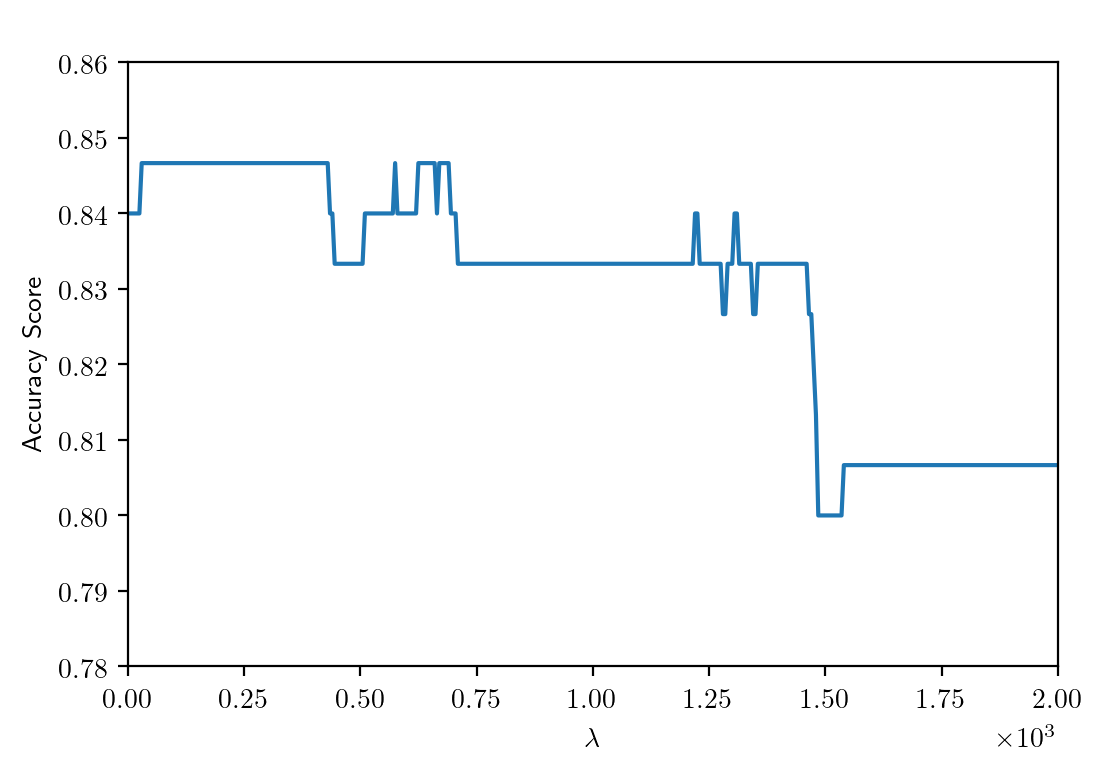}
        \caption{$\hat\mu = 1536$\\ Artificial data \RNum{2}  }
        \label{fig:arti-SM2-lambda-vs-score}
    \end{subfigure}

    \begin{subfigure}[b]{0.3\textwidth}
        \includegraphics[width=\textwidth]{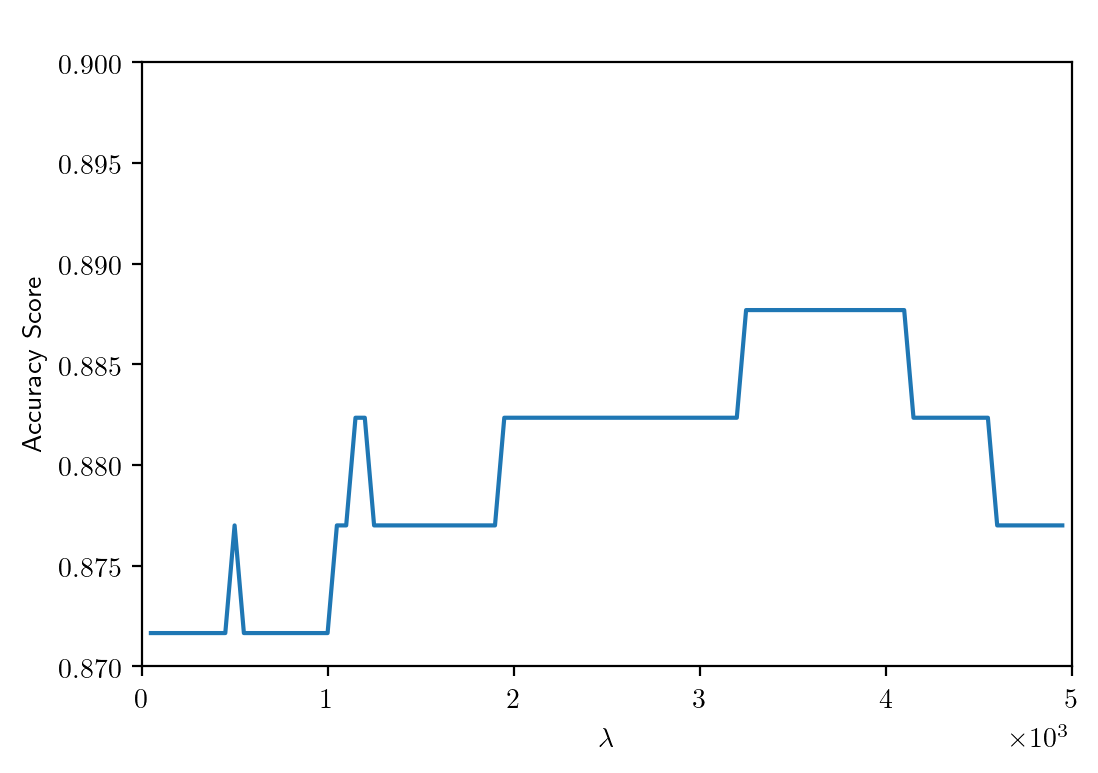}
        \caption{$\hat\mu = 9216$\\ Artificial data \RNum{3}  }
        \label{fig:arti-SM3-lambda-vs-score}
    \end{subfigure}
    ~
    \begin{subfigure}[b]{0.3\textwidth}
        \includegraphics[width=\textwidth]{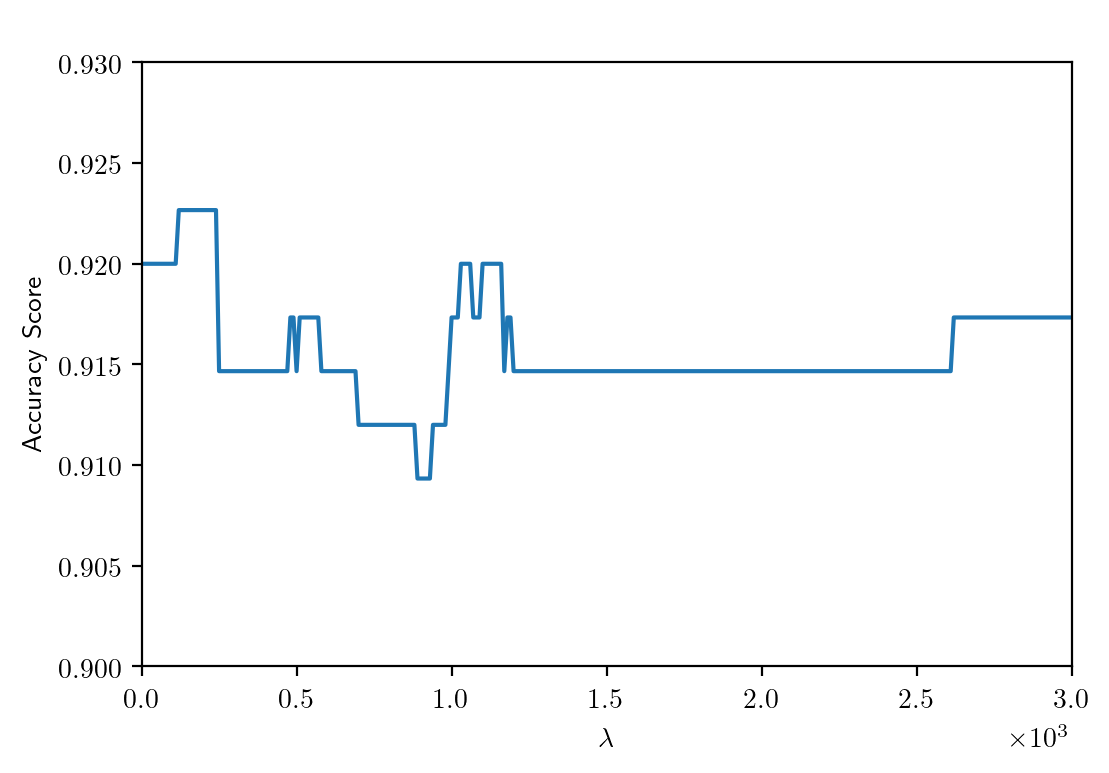}
        \caption{$\hat\mu = 32768$\\ Artificial data \RNum{4}  }
        \label{fig:arti-SM4-lambda-vs-score}
    \end{subfigure}
    ~
    \begin{subfigure}[b]{0.3\textwidth}
        \includegraphics[width=\textwidth]{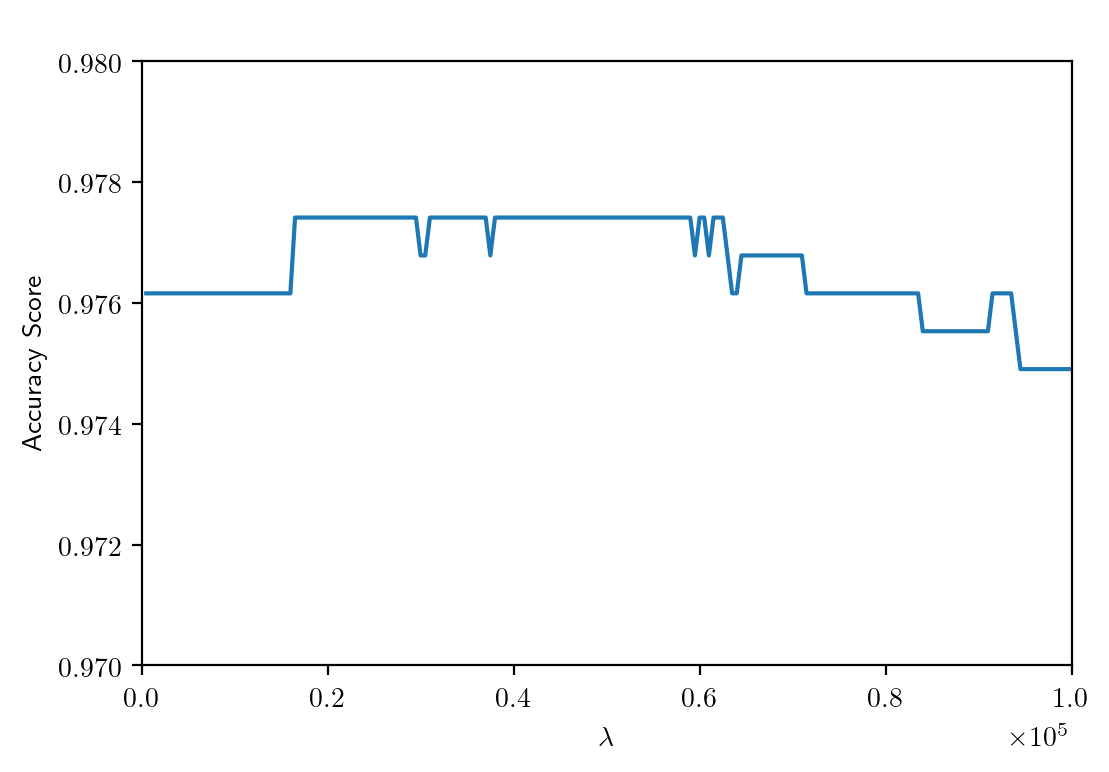}
        \caption{$\hat\mu = 131072$ \\ Car evaluation data}
        \label{fig:careval-lambda-vs-score}
    \end{subfigure}

    \caption{Accuracy score against the parameter $\lambda$}
    \label{fig:accuracy score vs lambda}
\end{figure}

Consider the case where a given data set is quadratically separable with a sparse $\Bf W$ matrix, i.e., when the separation surface $f(\bm x) = \frac{1}{2}\bm x^T \Bf W \bm x + \bm b^T \bm x + c = 0$ has a sparse $\Bf W$. By applying \ref{formulation:SQSSVM-vecL1} one can see that the $\ell_1$ norm regularization term enforces detecting  the true sparsity pattern of this matrix. To demonstrate  this property experimentally, we first generate a quadratic surface using $10$ features with the following sparse matrix $\Bf W$, vector $\bm b$ and  constant $c$:
\begin{equation} \label{example_sparsity_detection}
    \Bf W = { \begin{bmatrix}
      0 & 0 & 1 & 0 & 0 & 0 & 0 & 0 & 0 & 0\\
      0 & 0 & 0 & 2 & 0 & 0 & 0 & 0 & 0 & 0 \\
      1 & 0 & 0 & 0 & 3 & 0 & 0 & 0 & 0 & 0 \\
      0 & 2 & 0 & 0 & 0 & 4 & 0 & 0 & 0 & 0 \\
      0 & 0 & 3 & 0 & 0 & 0 & 5 & 0 & 0 & 0 \\
      0 & 0 & 0 & 4 & 0 & 0 & 0 & 6 & 0 & 0 \\
      0 & 0 & 0 & 0 & 5 & 0 & 0 & 0 & 7 & 0 \\
      0 & 0 & 0 & 0 & 0 & 6 & 0 & 0 & 0 & 0 \\
      0 & 0 & 0 & 0 & 0 & 0 & 7 & 0 & 0 & 0 \\
      0 & 0 & 0 & 0 & 0 & 0 & 0 & 0 & 0 & 0
    \end{bmatrix}}, \ \bm b = \begin{bmatrix}
         1 \\
         \vdots \\
         1 \\
         -1
       \end{bmatrix}, \ c = 2.
       \end{equation}

Next, we randomly generate $200$ data points on each side of the resulting quadratic separating  surface and another $100$ noisy data points  around this surface. then, using the same idea as explained before, we utilize \ref{formulation:SQSSVM} on this data set to tune parameter $\mu$ and obtain the best $\hat \mu$ (in terms of accuracy score). We finally apply \ref{formulation:SQSSVM-vecL1} with this $\hat \mu$ and obtain their  optimal matrices as $\lambda$ varies.  Figure \ref{fig:sparsity plot against lambda} shows that the sparsity of $\Bf W$ in (\ref{example_sparsity_detection}) is captured  as parameter $\lambda$ becomes larger.

\begin{figure}[H]
    \centering
    \begin{subfigure}[b]{0.22\textwidth}
        \includegraphics[width=\textwidth]{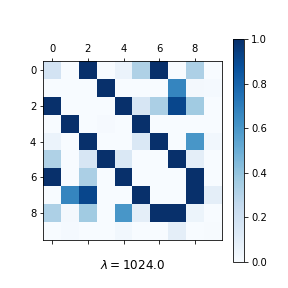}
        \caption{$\lambda = 1024$}
        \label{fig:sp1}
    \end{subfigure}
    \begin{subfigure}[b]{0.22\textwidth}
        \includegraphics[width=\textwidth]{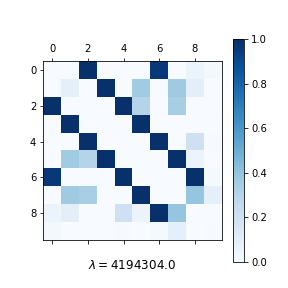}
        \caption{$\lambda = 4194304$}
        \label{fig:sp2}
    \end{subfigure}
    \begin{subfigure}[b]{0.22\textwidth}
        \includegraphics[width=\textwidth]{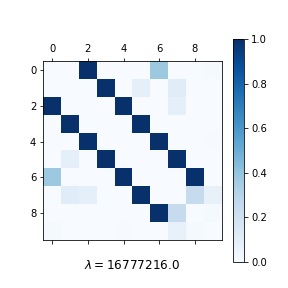}
        \caption{$\lambda = 16777216$}
        \label{fig:sp3}
    \end{subfigure}
    \begin{subfigure}[b]{0.22\textwidth}
        \includegraphics[width=\textwidth]{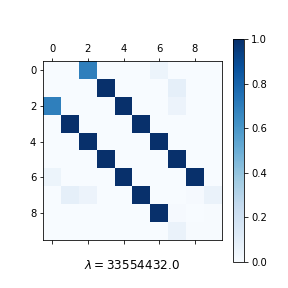}
        \caption{$\lambda = 33554432$}
        \label{fig:sp4}
    \end{subfigure}
    \caption{Sparsity pattern detection using \ref{formulation:SQSSVM-vecL1} as  parameter $\lambda$ varies.}
    \label{fig:sparsity plot against lambda}
\end{figure}

Lastly, the proposed \ref{formulation:SQSSVM-vecL1} is tested on five public benchmark data sets along with some well-known SVM models: \ref{formulation:SQSSVM}, \ref{formulation: SVM-hard margin}, and SVM with a Quadratic kernel (SVM-Quad). All the benchmark data sets are obtained from the UCI repository \cite{asuncion2007uci} and the basic information is listed in Table \ref{table: 2-class data sets info}. Notice that, for SVM-Quad we used SVC with 2-degree polynomial kernel Python package {\it{Scikit-learn}} \cite{scikit-learn}. We randomly pick $k\%$ ($k = 10, 20, 40$) \cite{dagher2008quadratic, luo2016soft} of the full data set as the training set. The parameters $\lambda$ and $\mu$ are tuned by using grid method \cite{luo2016soft, gao2020kernel}. The kernel parameters are selected by the package over training the full data set. In order to be statistically meaningful, for each fixed training rate $k$ on each model, the experiments are repeated for $50$ times. The mean, standard deviation, minimum and maximum of accuracy scores, and the average CPU time among these $50$ experiments are recorded. The accuracy score is defined as the rate of achieved correct labels by the model over the full data set. Note that the CPU time recorded in this paper does not include the time for tuning the parameters.


\begin{table}[H]
\centering
\begin{tabular}{cccc}
\hline
Data set                       & \# of features    & name of class & sample size \\ \hline
\multirow{2}{*}{Iris} & \multirow{2}{*}{4}
& versicolour  &  50           \\  &
& virginica    &  50           \\ \hline
\multirow{2}{*}{Car evaluation} & \multirow{2}{*}{6}
&      unacc         &  1210           \\
        &                    &    acc          &  384         \\ \hline
    \multirow{2}{*}{Diabetes} & \multirow{2}{*}{8}
&      yes        &  268          \\
        &                    &    no         &  500        \\ \hline
    \multirow{2}{*}{German Credit Data} & \multirow{2}{*}{20}
&      creditworthy        &  700          \\
        &                    &    non-creditworthy          &  300        \\ \hline
    \multirow{2}{*}{Ionosphere} & \multirow{2}{*}{34}
    &      good         &  225          \\
            &                    &    bad          &  126         \\ \hline
\end{tabular}
\caption{Description of 2-class data sets used.}
\label{table: 2-class data sets info}
\end{table}









\begin{table}[H]
\centering
\begin{tabular}{ccccccc}
\hline
\multirow{2}{*}{\begin{tabular}[c]{@{}c@{}}Training Rate\\  $k$\%\end{tabular}} & \multirow{2}{*}{Model} & \multicolumn{4}{c}{Accuracy score (\%)}   & \multirow{2}{*}{CPU time (s)} \\ \cline{3-6}

                         &   & mean  & std  & min   & max   &                               \\ \hline
\multirow{4}{*}{10}
 & L1-SQSSVM	 & 	91.93 & 5.49 & 63.33 & 	98.89 & 	0.058 \\
 & 	SQSSVM & 	89.33 & 4.07 & 	81.11 & 	96.67 & 	0.050 \\
 & 	SVM-Quad & 	89.49 & 4.91 & 	80.00 & 97.78 & 	0.003 \\
 & 	SVM & 	89.62 & 4.10 & 	78.89 &	97.78 & 	0.001 \\

\hline
\multirow{4}{*}{20}
& L1-SQSSVM	 & 	94.33 &	2.20 & 	90.00 & 	98.75 & 	0.063 \\
 & 	SQSSVM & 	92.60 & 2.57 & 	82.50 & 	96.25 & 	0.055 \\
 & 	SVM-Quad & 	93.03 & 2.72 & 	86.25 & 	98.75 & 	0.002 \\
 & 	SVM & 	93.00 & 3.01 & 	82.50 & 97.50 & 	0.002 \\

\hline
\multirow{4}{*}{40}
 &L1-SQSSVM	 & 	95.40 &	2.76 & 	86.67 & 	100.00 & 	0.075 \\
 & 	SQSSVM & 	93.97 & 3.73 & 	78.33 & 	100.00 & 	0.062 \\
 & 	SVM-Quad & 	94.30 & 3.38 & 	81.67 & 	98.33 & 	0.002 \\
 & 	SVM & 	94.50 & 3.29 & 	85.00 & 	100.00 & 	0.002 \\

\hline

\hline
\end{tabular}
\caption{Iris results.}
\label{table: iris data results}
\end{table}

\begin{table}[H]
\centering
\begin{tabular}{ccccccc}
\hline
\multirow{2}{*}{\begin{tabular}[c]{@{}c@{}}Training Rate\\  $k$\%\end{tabular}} & \multirow{2}{*}{Model} & \multicolumn{4}{c}{Accuracy score (\%)}   & \multirow{2}{*}{CPU time (s)} \\ \cline{3-6}

                         &   & mean  & std  & min   & max   &                               \\ \hline
\multirow{4}{*}{10}
 & L1-SQSSVM	 & 	90.48 & 2.13 & 	83.48 & 95.05 & 	0.961 \\
 & 	SQSSVM & 	90.48 & 2.35 & 	80.98 & 94.49 & 	0.937 \\
 & 	SVM-Quad & 	88.32 & 2.70 & 	80.98 & 93.45 & 	0.023 \\
 & 	SVM      & 	84.40 & 1.09 & 81.88 & 	86.90 & 0.001 \\

\hline
\multirow{4}{*}{20}
 & L1-SQSSVM	 & 	92.81 & 1.17 & 	89.50 & 95.30 & 	1.109 \\
 & 	SQSSVM & 	92.77 & 1.21 & 	89.58 & 95.30 & 	1.117 \\
 & 	SVM-Quad & 	92.30 & 1.14 & 	88.56 & 94.83 & 	0.001 \\
 & 	SVM      & 	85.08 & 0.91 & 83.23 & 	86.91 & 0.008 \\

\hline
\multirow{4}{*}{40}
 &L1-SQSSVM	 & 	95.80 &	0.73 & 	93.83 & 	97.07 & 	1.501 \\
 & 	SQSSVM & 	95.76 & 0.77 & 	93.83 & 	97.28 & 	1.521 \\
 & 	SVM-Quad & 	93.69 & 0.83 & 	91.43 & 	95.72 & 	0.087 \\
 & 	SVM & 	85.26 & 1.09 & 	81.71 & 	87.36 & 	0.003 \\

\hline

\hline
\end{tabular}
\caption{Car evaluation results.}
\label{table: careval results}
\end{table}

    \begin{table}[H]
    \centering
    \begin{tabular}{ccccccc}
    \hline
    \multirow{2}{*}{\begin{tabular}[c]{@{}c@{}}Training Rate\\  $k$\%\end{tabular}} & \multirow{2}{*}{Model} & \multicolumn{4}{c}{Accuracy score (\%)}   & \multirow{2}{*}{CPU time (s)} \\ \cline{3-6}

                             &   & mean  & std  & min   & max   &                               \\ \hline
    \multirow{4}{*}{10}
&	L1-SQSSVM	&	74.21	&	1.53	&	71.24	&	76.01	&	0.692	\\
&	SQSSVM	&	64.38	&	3.65	&	57.80	&	71.68	&	0.679	\\
&	SVM-Quad	&	66.07	&	4.53	&	57.66	&	71.53	&	0.102	\\
&	SVM	&	72.95	&	3.49	&	65.61	&	76.16	&	0.003	\\

    \hline
    \multirow{4}{*}{20}
&	L1-SQSSVM	&	76.28	&	0.63	&	75.12	&	77.07	&	0.924	\\
&	SQSSVM	&	69.40	&	2.49	&	65.85	&	72.52	&	0.950	\\
&	SVM-Quad	&	70.28	&	2.30	&	65.85	&	73.82	&	9.080	\\
&	SVM	&	74.86	&	1.68	&	71.54	&	77.07	&	0.009	\\

    \hline
    \multirow{4}{*}{40}
&	L1-SQSSVM	&	76.62	&	1.83	&	73.97	&	79.61	&	1.459	\\
&	SQSSVM	&	74.34	&	1.99	&	71.15	&	77.01	&	1.490	\\
&	SVM-Quad	&	75.21	&	1.23	&	73.54	&	77.22	&	86.561	\\
&	SVM	&	76.29	&	2.15	&	73.10	&	80.26	&	0.006	\\

    \hline

    \hline
    \end{tabular}
    \caption{Diabetes results.}
    \label{table: Pima Diabetes results}
    \end{table}

    \begin{table}[H]
    \centering
    \begin{tabular}{ccccccc}
    \hline
    \multirow{2}{*}{\begin{tabular}[c]{@{}c@{}}Training Rate\\  $k$\%\end{tabular}} & \multirow{2}{*}{Model} & \multicolumn{4}{c}{Accuracy score (\%)}   & \multirow{2}{*}{CPU time (s)} \\ \cline{3-6}

                             &   & mean  & std  & min   & max   &                               \\ \hline
    \multirow{4}{*}{10}
&	L1-SQSSVM	&	71.86	&	1.85	&	68.44	&	75.00	&	1.596	\\
&	SQSSVM	&	67.00	&	3.02	&	63.67	&	71.67	&	1.598	\\
&	SVM-Quad	&	68.29	&	2.61	&	64.00	&	72.44	&	0.006	\\
&	SVM	&	69.49	&	3.58	&	61.89	&	74.33	&	0.002	\\

    \hline
    \multirow{4}{*}{20}
&	L1-SQSSVM	&	73.88	&	1.29	&	71.38	&	75.88	&	2.572	\\
&	SQSSVM	&	67.55	&	2.78	&	62.88	&	72.88	&	2.541	\\
&	SVM-Quad	&	67.78	&	2.75	&	64.13	&	72.13	&	0.005	\\
&	SVM	&	73.86	&	1.22	&	71.25	&	75.88	&	0.005	\\

    \hline
    \multirow{4}{*}{40}
    &	L1-SQSSVM	&	74.86	&	1.25	&	72.00	&	77.00	&	4.622	\\
&	SQSSVM	&	65.99	&	2.66	&	61.17	&	69.83	&	4.456	\\
&	SVM-Quad	&	65.13	&	1.19	&	63.50	&	67.00	&	0.262	\\
&	SVM	&	74.73	&	1.07	&	73.50	&	77.00	&	0.005	\\

    \hline

    \hline
    \end{tabular}
    \caption{German Credit Data results.}
    \label{table: GCD results}
    \end{table}

    \begin{table}[H]
    \centering
    \begin{tabular}{ccccccc}
    \hline
    \multirow{2}{*}{\begin{tabular}[c]{@{}c@{}}Training Rate\\  $k$\%\end{tabular}} & \multirow{2}{*}{Model} & \multicolumn{4}{c}{Accuracy score (\%)}   & \multirow{2}{*}{CPU time (s)} \\ \cline{3-6}

                             &   & mean  & std  & min   & max   &                               \\ \hline
    \multirow{4}{*}{10}
     &	L1-SQSSVM	&	82.75	&	3.69	&	76.27	&	88.29	&	4.141	\\
    &	SQSSVM	&	79.24	&	3.15	&	74.37	&	83.86	&	3.945	\\
    &	SVM-Quad	&	83.48	&	2.39	&	78.48	&	78.48	&	0.003	\\
    &	SVM	&	80.09	&	2.24	&	75.95	&	82.28	&	0.006	\\

    \hline
    \multirow{4}{*}{20}
    &	L1-SQSSVM	&	87.90	&	3.72	&	80.07	&	92.53	&	5.096	\\
    &	SQSSVM	&	87.19	&	4.32	&	77.94	&	91.81	&	4.854	\\
    &	SVM-Quad	&	86.16	&	1.24	&	84.34	&	84.34	&	0.005	\\
    &	SVM	&	82.03	&	5.40	&	67.97	&	86.83	&	0.002	\\

    \hline
    \multirow{4}{*}{40}
     &L1-SQSSVM	 & 	90.28 &	3.33 & 	83.41 & 94.31 & 	7.063 \\
     & 	SQSSVM & 89.53 & 4.23 & 81.99 & 94.31 & 6.781 \\
     & 	SVM-Quad & 	86.40 & 3.03 & 	81.04 & 	91.00 & 	0.007 \\
     & 	SVM & 	83.60 & 3.46 & 	76.78 & 88.63 & 	0.006 \\

    \hline

    \hline
    \end{tabular}
    \caption{Ionosphere results.}
    \label{table: Ionosphere data results}
    \end{table}


From Tables \ref{table: iris data results} - \ref{table: Ionosphere data results}, we have the following observations:

\begin{itemize}
    \item The mean accuracy scores obtained by \ref{formulation:SQSSVM-vecL1} are the same or better than those of other models over all the tested benchmark data sets.
    
    \item The proposed L1-SQSSVM model produces the highest mean accuracy scores on the German Credit Data and the Ionosphere, the two data sets with many features. This indicates that the proposed model has the potential for classifying data sets with large number of features. 
    
    \item The training CPU times of the proposed \ref{formulation:SQSSVM-vecL1} on the tested data sets are acceptable.
\end{itemize}


\section{Conclusion}
This paper generalizes the standard kernel-free models of linear and quadratic surface support vector machines. The SVMs are only designed for (almost) linearly separable data sets and the QSSVMs only work for (almost) quadratically separable data sets cannot reduce to the corresponding hard or soft margin SVM if the data set is linearly separable. In other words, when the actual $\Bf W= \Bf 0$, the QSSVMs often output a surface with $\Bf W^* \ne \Bf 0$, which is not ideal for such generalizations of SVMs.

By incorporating an $\ell_1$ norm regularization in the objective function, we propose L1-QSSVM models that not only resolve these shortcomings but also account for possible sparsity patterns in setting appropriate penalty parameters. We further establish other interesting theoretical results such as solution existence, uniqueness, and vanishing margin for the soft margin version if the penalty parameter is large enough. To conclude the paper, we summarize all the obtained theoretical results for different types of data sets in the table below.


\begin{table}[h]
\centering
\resizebox{\textwidth}{!}{
\begin{tabular}{|c|c|c|}
\hline
 \backslashbox{Data set}{Model}  &
  $\begin{array}{c}
 \textrm{L1-QSSVM}
\end{array}$
  &
  $\begin{array}{c}
 \textrm{L1-SQSSVM}
\end{array}$
  \\
\hline \hline
$\begin{array}{c}
 \textrm{Linearly}\\
 \textrm{Separable}
\end{array}$
& $\begin{array}{l}
 \bullet \ \textrm{Solution existence}\\
 \bullet \ \bm z^* \textrm{ is almost always unique} \\
\bullet \ \textrm{Equivalence with SVM}  \\
 \textrm{\ \ \ for large enough} \ \lambda
\end{array}$ &
$\begin{array}{l}
\bullet \ \text{Solution existence} \\
\bullet \ \bm z^* \textrm{ is almost always unique}\\
\bullet \ \textrm{Equivalence with SSVM}  \\
 \textrm{\ \ for large enough} \ \lambda \\
\bullet \ \textrm{Solution is almost always} \\
\textrm{\ \ \ unique with } \bm \xi^*=\bm 0\text{ \ for} \\
\textrm{ \ \  large enough} \ \mu  
\end{array}$
\\
\hline
\hline
$\begin{array}{c}
 \textrm{Quadratically} \\
 \textrm{Separable}
\end{array}$  & $\begin{array}{l}
\bullet \textrm{ Solution existence} \\
 \bullet \ \bm z^* \textrm{ is almost always unique} \\
\bullet \ \textrm{Capturing possible sparsity} \\
\textrm{\ \ \ of} \ \Bf W^* \ \textrm{for large enough} \ \lambda \\
\end{array}$ &
$\begin{array}{l}
\bullet \ \text{Solution existence} \\
\bullet \ \bm z^* \textrm{ is almost always unique}\\
\bullet \ \textrm{Solution is almost always } \\
\textrm{\ \ \ unique with } \bm \xi^*=\bm 0\textrm{ for} \\
\textrm{ \ \ large enough} \ \mu \\
\bullet \ \textrm{Capturing possible sparsity} \\
\textrm{\ \ \ of} \ \Bf W^* \ \textrm{for large enough} \ \lambda \\
\end{array}$
\\
\hline \hline
$\begin{array}{c}
 \textrm{Neither}
\end{array}$
& $\begin{array}{l}
\\
\end{array}$ &
$\begin{array}{l}
 \bullet \textrm{ Solution existence} \\
 \bullet \ \bm z^* \textrm{ is almost always unique}
\end{array}$ \\
\hline
\end{tabular}
}
\caption{Summary of obtained theoretical  results in this paper.}
\label{table: obtained_results_summary}
\end{table}


 Therefore, along with the promising practical efficiency of these models as demonstrated in Section \ref{sec:Numerical Experiments}, we  conclude that the proposed L1-QSSVMs are justifiable in theory and effective in practice.

Our investigation of the proposed L1-SQSSVM model for binary classification leads to some potential research extensions. An immediate future work is to investigate the robustness of the proposed model on noisy data sets. Another interesting research direction is to apply the proposed model to some real-world applications, such as disease diagnosis, customer segmentation and more. Moreover, we notice that even though the computational efficiency is acceptable, it still needs to be improved. As a convex optimization problem, we plan to design a fast greedy algorithm \cite{gao2019rescaled} to solve it.

\section*{Acknowledgments} The authors would like to greatly thank Professor Jinglai Shen for reminding them of a result on the boundedness of the Lagrangian multipliers in convex programs under the Slater's condition. 

\newpage

\appendix\label{Appendix}
\section{Proof of Theorem \ref{th:pd_G}}\label{proof:thm:pd_G}
\begin{proof}
By the definition of $\dataMi$, for all $i=1,\cdots, m$, we have:
$$ \left(\dataHi\right)^T \dataHi \, = \,
\left[
\begin{array}{c}
\left(\dataMi\right)^T \\ \hdashline[2pt/2pt]
\mathbf I_n
\end{array}
\right]\left[
\begin{array}{c;{2pt/2pt}c}
\dataMi & \mathbf I_n
\end{array}
\right] \, = \, \left[
\begin{array}{c;{2pt/2pt}c}
  \left( \dataMi \right)^T \dataMi & \left(\dataMi\right)^T \\\hdashline[2pt/2pt]
  \dataMi & \mathbf I_n
\end{array}
\right].
$$
Therefore,
\begin{equation}\label{eq:G_definition}
\Bf G \, = \, 2 \sum_{i=1}^{m} \left(\dataHi\right)^T \dataHi \, = \, 2 \left[
\begin{array}{c;{2pt/2pt}c}
  \sum_{i=1}^{m} \left( \dataMi \right)^T \dataMi&  \sum_{i=1}^{m}\left(\dataMi\right)^T \\\hdashline[2pt/2pt]
  \sum_{i=1}^{m} \dataMi & m \mathbf I_n
\end{array}
\right].
\end{equation}
By Lemma \ref{lm:pd_partitioned}, it is clear that $\Bf G \succ 0$ if and only if
$$ \sum_{i=1}^{m} \left( \dataMi \right)^T \dataMi - \frac{1}{m}\left( \sum_{i=1}^{m}\dataMi\right)^T \left(\sum_{i=1}^{m} \dataMi\right) \, \succ \, 0.$$
By definitions, we have
\begin{equation}\nonumber
\sum_{i=1}^{m} \left( \dataMi \right)^T \dataMi \, = \, \sum_{i=1}^{m} \left( \dataXi \Bf D_n\right)^T \left(\dataXi \Bf D_n\right) \, = \, \Bf D_n^T \left[\sum_{i=1}^{m}\left( \dataXi \right)^T \dataXi \right] \Bf  D_n.
\end{equation}
Similarly, we have
\begin{eqnarray}
\left( \sum_{i=1}^{m}\dataMi\right)^T \sum_{i=1}^{m} \dataMi & = & \left( \sum_{i=1}^{m}\dataXi\Bf  D_n\right)^T \sum_{i=1}^{m} \dataXi\Bf  D_n \nonumber\\
 & = & \Bf D_n^T \left[\left(\sum_{i=1}^{m} \dataXi\right)^T \left(\sum_{i=1}^{m} \dataXi \right)\right] \Bf D_n \nonumber.
\end{eqnarray}
Since $\Bf D_n$ has full column rank, therefore
\begin{eqnarray*}
\Bf G \, \succ \, 0 & \Leftrightarrow & \Bf D_n^T \left[ \sum_{i=1}^{m}\left( \dataXi \right)^T \dataXi \, - \frac{1}{m} \left(\sum_{i=1}^{m} \dataXi\right)^T \left(\sum_{i=1}^{m} \dataXi \right)\right] \Bf D_n \, \succ \, 0 \nonumber\\
& \Leftrightarrow & \sum_{i=1}^{m}\left( \dataXi \right)^T \dataXi \, - \, \frac{1}{m} \left(\sum_{i=1}^{m} \dataXi\right)^T \left(\sum_{i=1}^{m} \dataXi \right) \, \succ \, 0.
\end{eqnarray*}
For any given vector $\bm v \in \R^{n^2}$, we partition $\bm v$ into $n$ parts with equal length, i.e.,
$$\bm v = [(\bm v^1)^T, (\bm v^2)^T, \cdots, (\bm v^n)^T]^T.$$
We have, for each $i=1,\cdots,n$
$$\dataXi \bm v \, = \, \left[
\begin{array}{ccc}
(\dataxi)^T \\
&\ddots & \\
&& (\dataxi)^T
\end{array}
\right]\left[
\begin{array}{c}
\bm v^1 \\
\vdots\\
\bm v^n
\end{array}
\right]\, = \, \left[
\begin{array}{ccc}
(\dataxi)^T \bm v^1\\
\vdots\\
(\dataxi)^T \bm v^n
\end{array}
\right].$$
Let $b_{ij} = \left(\dataxi\right)^T \bm v^j$ and
$\bm b^i = [b_{i1},\cdots, b_{in}]^T \in \R^n.$
We can see that $\dataXi \bm  v= \bm b^i$ and hence
$$\bm v^T \left[\sum_{i=1}^{m}\left( \dataXi \right)^T \dataXi \right] \bm v \, = \, \sum_{i=1}^{m} \bm v^T\left( \dataXi \right)^T \dataXi \bm v \, = \, \sum_{i=1}^m \|\bm b^i\|_2^2.$$
Also, we have
\begin{eqnarray}\label{eq:key_inequality}
\bm v^T \left[\left(\sum_{i=1}^{m} \dataXi\right)^T \left(\sum_{i=1}^{m} \dataXi \right)\right] \bm v & = & \left(\sum_{i=1}^{m} \dataXi \bm v\right)^T\left(\sum_{i=1}^{m} \dataXi \bm v\right) \nonumber \\
& = & \left(\sum_{i=1}^m \bm b^i\right)^T\left(\sum_{i=1}^m \bm b^i\right) \, = \, \left\|\sum_{i=1}^m \bm b^i\right\|_2^2\nonumber\\
& = & \sum_{i=1}^m \|\bm b^i\|^2_2 + \sum_{i\neq k} (\bm b^i)^T \bm  b^k \, \leqslant \, \sum_{i=1}^m \|\bm b^i\|^2_2 + \sum_{i\neq k} \|\bm b^i\|_2 \|\bm b^k\|_2\nonumber\\
& \leqslant & \sum_{i=1}^m \|\bm b^i\|^2_2  + \sum_{i\neq k} \frac{1}{2}\left(\|\bm b^i\|^2_2 + \|\bm b^k\|^2_2\right) = \, m\sum_{i=1}^m \|\bm b^i\|_2^2,
\end{eqnarray}
where  the equality holds if and only if $ \bm b^1 \, = \, \bm b^2 \, = \,\cdots \, = \,\bm  b^m.$
This is equivalent to
$$ \left(\bm x^{(1)}\right)^T\bm  v^j \, = \, \left(\bm x^{(2)}\right)^T \bm v^j \, = \, \cdots \, = \, \left(\bm x^{(m)}\right)^T \bm v^j, \quad j=1,\cdots, n.$$
Since $\Bf X$ has full column rank, this in turn implies that there exists $\bm u \in \R^n$ such that
$\Bf X \bm  u = \mathbf 1_m.$
However, by assumption $\mathbf 1_m$ is not in the column space of $\Bf X$, therefore the inequality in (\ref{eq:key_inequality}) holds strictly. That is,
$$\bm v^T \left[ \sum_{i=1}^{m}\left( \dataXi \right)^T \dataXi \, - \, \frac{1}{m} \left(\sum_{i=1}^{m} \dataXi\right)^T \left(\sum_{i=1}^{m} \dataXi \right) \right]\bm v>0,$$
for all $\bm v \in \R^{n^2}$ and $\bm v\neq 0$, i.e.,
$$\sum_{i=1}^{m}\left( \dataXi \right)^T \dataXi \, - \, \frac{1}{m} \left(\sum_{i=1}^{m} \dataXi\right)^T \left(\sum_{i=1}^{m} \dataXi \right) \, \succ\, 0.$$
This concludes the proof.

\end{proof}

\section{Proof of Theorem \ref{th:surely_pd_G}} \label{proof:thm:surely_pd_G}
\begin{proof}
From Theorem \ref{th:pd_G}, it suffices to show that the set of data matrices $\Bf  X$ not satisfying assumptions (A1) or (A2) is of Lebesgue measure zero in $\R^{m\times n}$. We augment $\Bf  X$ by appending an all 1 column to it. That is, we consider
$$\overline{\Bf  X} =
\left[
\begin{array}{c;{2pt/2pt}c}
  \Bf X & \bm 1_{m}
\end{array}
\right].$$
It is clear that assumptions (A1) and (A2) holds if and only if $\overline{\Bf X}$ has full column rank. We let
$$\mathcal S \, \triangleq \, \{\Bf  X\in \mathbb R^{m\times n} \ | \ \overline{\Bf  X} \text{ has linearly independent columns} \}.$$
Let $\mathcal I$ be an arbitrary subset of $\{1,\cdots,m\}$ with $n+1$ elements. Define the following polynomial:
$$ \varphi_{\mathcal I} (\Bf  X) \, \triangleq \, \det(\overline{\Bf X}_{\mathcal I \bullet}) \left( \sum_{i \notin \mathcal I} \sum_{j=1}^{n+1} \left[(\overline{ \Bf X}_{ij})^2 + 1\right] \right),$$
where $\overline{\Bf X}_{\mathcal I \bullet}$ is the submatrix formed by taking the rows with the index in $\mathcal I$ of $\overline{\Bf  X}$. It is clear that $\varphi_{\mathcal I} (\Bf  X) = 0$ if and only if $\det(\overline {\Bf  X}_{\mathcal I \bullet})=0$. Let the zero set of $\varphi_{\mathcal I}(\Bf X)$ be denoted by $z(\varphi_{\mathcal I}(\Bf  X))$. It is clear that this zero set is of Lebesgue measure 0 in $\R^{m\times n}$. Let the collection of all subset of $\{1,\cdots, m\}$ with $n+1$ elements be denoted by $\Theta$. We notice that the complement of $S$ in $\R^{m\times n}$ is
$$S^c \, = \, \left\{ \Bf  X \in \R^{m\times n} \, | \, \forall \, \mathcal J \in \Theta \mbox{ it holds that }\det(\overline{\Bf  X}_{\mathcal I \bullet}) = 0\right\}.$$
Thus,
$$S^c \subseteq \bigcap_{\mathcal I \in \Theta} z(\varphi_{\mathcal I}(\Bf  X)).$$
Since the right-hand side is a finite intersection of measure zero sets, it is still a measure zero set. Therefore $S^c$ is a measure zero set in $\R^{m\times n}$. This concludes the proof.

\end{proof}


\bibliographystyle{plain}
\bibliography{L1-SQSSVM}

\medskip
Received xxxx 20xx; revised xxxx 20xx.
\medskip

\end{document}